\newcommand{\R}{\mathbb{R}}
\newcommand{\p}{{\rm I}\kern-0.18em{\rm P}}
\newcommand{\E}{{\rm I}\kern-0.18em{\rm E}}
\newenvironment{noindlist}
 {\begin{list}{\labelitemi}{\leftmargin=0em \itemindent=1em}}
 {\end{list}}
\title{mSAM: Micro-Batch-Averaged Sharpness-Aware Minimization}
\author[1,2]{Kayhan Behdin\thanks{behdin1675@gmail.com}}
\author[1]{Qingquan Song}
\author[1]{Aman Gupta}
\author[1]{Sathiya Keerthi}
\author[1]{Ayan Acharya}
\author[1]{Borja Ocejo}
\author[1,3]{Gregory Dexter}
\author[3]{Rajiv Khanna}
\author[1]{David Durfee}
\author[1,2]{Rahul Mazumder}
\affil[1]{LinkedIn Corporation, USA}
\affil[2]{Massachusetts Institute of Technology, USA}
\affil[3]{Purdue University, USA}
\date{}
\begin{document}

\maketitle

\begin{abstract}

Modern deep learning models are over-parameterized, where different optima can result in widely varying generalization performance. The Sharpness-Aware Minimization (SAM) technique modifies the fundamental loss function that steers gradient descent methods toward flatter minima, which are believed to exhibit enhanced generalization prowess. Our study delves into a specific variant of SAM known as micro-batch SAM (mSAM). This variation involves aggregating updates derived from adversarial perturbations across multiple shards (micro-batches) of a mini-batch during training. 
We extend a recently developed and well-studied general framework for flatness analysis to theoretically show that  SAM achieves flatter minima than SGD, and mSAM achieves even flatter minima than SAM. We provide a thorough empirical evaluation of various image classification and natural language processing tasks to substantiate this theoretical advancement. We also show that contrary to previous work, mSAM can be implemented in a flexible and parallelizable manner without significantly increasing computational costs. Our implementation of mSAM yields superior generalization performance across a wide range of tasks compared to SAM, further supporting our theoretical framework.

\end{abstract}

\newcommand{\Scal}{\mathcal{S}}
\newcommand{\Abhat}{\hat{\matr{A}}}
\newcommand{\EE}{\mathbb{E}}

\section{Introduction}

Overparameterized deep neural networks (DNNs) have established themselves as a cornerstone of modern advancements in machine learning, consistently delivering state-of-the-art results across diverse domains such as image comprehension~\cite{alexnet, resnets, tan2019efficientnet}, natural language processing (NLP)~\cite{vaswani2017attention, devlin2018bert, liu2019roberta}, and recommender systems~\cite{guo2017deepfm, naumov2019deep}. The training of DNNs necessitates the minimization of complex and non-convex loss functions, entailing a multitude of minima. Intriguingly, these distinct minima can exhibit varying degrees of generalizability when faced with previously unseen data~\cite{flatness2, liu2020bad}. Consequently, the selection of an optimization framework capable of identifying minima that contribute to robust generalization performance assumes paramount significance. A broad spectrum of optimization algorithms has been developed to cater to diverse domains, including methodologies like stochastic gradient descent (SGD), heavy-ball momentum~\cite{sutskever2013importance}, Adam~\cite{kingma2014adam}, and LAMB~\cite{you2019large}, among others. When complemented with appropriate regularization techniques, these approaches play a pivotal role in yielding robust generalization capabilities. 
The ability to perform implicit regularization of SGD-like methods has also garnered considerable attention in recent years ~\cite{jasf20, cokl21}.

In recent times, a substantial body of work has been dedicated to exploring the relationship between the geometry of the loss landscape and its impact on generalization~\cite{flatness2, flatness1, xiss21, wuws22, hawl21, saqu18}. The novel Sharpness-Aware Minimization (SAM) algorithm~\cite{sampaper} capitalizes on this interplay by introducing adjustments to the loss function that enable the optimization process to gravitate towards solutions characterized by increased flatness during training, resulting in enhanced generalization across a broad spectrum of tasks and domains. Specifically, this approach transforms the structure of the loss function to account for the maximal value within a localized vicinity surrounding the current parameters within the loss landscape. The gradient descent step within this framework involves a composite of two distinct phases, the first applying an adversarial perturbation. Conceptually, SAM gauges flatness by analyzing gradients in an adversarial direction, guiding a descent strategy based on the sharpness at its worst case.

Unlike standard gradient descent (GD), the compositional nature of the SAM gradient computation implies that further splitting the mini-batch into disjoint shards (micro-batches) and averaging the updates will not lead to the same gradient. This was noted in \citet{sampaper}, where they introduced this notion as mSAM, with $m$ being the number of micro-batches of a single mini-batch. By splitting into disjoint shards, mSAM leverages several adversarial directions, which may more robustly represent flatness.
In this work, we build the corresponding theoretical framework for improved flatness of mSAM that utilizes and extends previous techniques. Our extensive experimental results confirm these insights on a wide range of tasks. While the focus will particularly be on mSAM, our techniques apply more generally. They further imply that splitting the mini-batch for compositional gradient computations, utilized in other variants of SAM, may also lead to an improved flatness of minima. Such observations invariably encourage further study of splitting the mini-batch into shards whenever the gradient update follows a non-linear aggregation.

\paragraph{Related Work:}

Although the sharpness of the loss landscape can be calculated using several different measures, such as the largest eigenvalue~\cite{wucw18}, trace~\cite{minimumsharpness}, or Frobenius norm~\cite{wuws22} of the loss Hessian, many of these metrics prove to be computationally demanding for practical purposes. Given the established correlation between the sharpness of the loss landscape and generalization performance~\cite{flatness2}, the central concept underlying the SAM algorithm~\cite{sampaper} revolves around guiding the network to explore regions where the worst-case loss value within a local neighbourhood remains reasonably moderate. This pragmatic approximation of sharpness offers a computationally manageable alternative, distinct from the sharpness metrics mentioned before. The emergence of SAM has sparked a wave of interest in sharpness-aware training, culminating in the development of several variants~\cite{gsampaper, looksam, samforfree, kwkp21, kilh22}.

In this paper, we focus on the effect of further splitting the mini-batch into shards for gradient computation of sharpness-aware minimization methods. While our theoretical results on flatness will generalize, we primarily consider mSAM, which is this technique applied to the original SAM algorithm. In~\citet{sampaper}, mSAM is used \textit{implicitly} to reduce the computational cost of SAM by avoiding synchronization across multiple GPUs (referred to as ``accelerators'' hereinafter). Recently, it has been observed via limited experimentation that mSAM results in better generalization performance~\cite{sampaper,samvit,nonsensemsam}. \citet{icmlPaper} present mathematical expressions for mSAM, but their analysis is primarily focused on a particular version of mSAM (see Section~\ref{sec:algs} for more details). The experiments are also limited to image classification tasks on small architectures. This paper provides a more general theoretical framework that can also be extended to other sharpness-aware minimization variants.

A cluster of contemporary research papers centered on the stability analysis of GD and SGD-like algorithms unveil that they function in a regime teetering on the edge of stability~\cite{cokl21, cogk22, arlp22, jasf20}. This precarious balance is characterized by the maximum eigenvalue of the Hessian matrix associated with the training loss converging near the threshold of $2/\eta$, where $\eta$ denotes the learning rate. These findings subsequently offer a foundation upon which upper bounds on the maximum eigenvalue of the Hessian can be formulated (further elaboration is provided in Section~\ref{sec:theory}).

\paragraph{Our Contributions:}

Our contributions in this paper can be summarized as follows:
\begin{noindlist}
\item We demonstrate how mSAM improves flatness over SAM, which in turn guarantees better flatness than SGD. To that end, we leverage theoretical ideas about the implicit generalization ability of SGD-like methods and recent work related to the stability analysis of full-batch GD and SGD.
\item Starting from the mathematical description of mSAM, we present an \textit{explicit} and flexible implementation of mSAM that does not rely on accelerator synchronization and is compatible with any single/multi-accelerator setup.
\item We conduct extensive experiments on a wide variety of computer vision and NLP tasks, leveraging architectures like Convolutional Neural Networks (CNNs)~\cite{alexnet, resnets} and Transformers~\cite{vaswani2017attention}, where mSAM consistently outperforms SAM and vanilla training strategy.
\end{noindlist}

\section{Algorithm}\label{sec:algs}

In this section, we rigorously introduce mSAM, based on the SAM algorithm that aims to obtain flat solutions to the empirical loss function. In particular, SAM tries to find a solution that minimizes the worst-case loss in a ball around the solution. Mathematically, let $\mathcal{T}=\{(\mathbf{x}_i,y_i), i\in[n]: \mathbf{x}_i\in\mathcal{X},y_i\in\mathcal{Y}\}$ be a training dataset of $n$ samples, where $\mathcal{X}$ is the set of features, $\mathcal{Y}$ is the set of outcomes and $[n]=\{1,\ldots,n\}$. Moreover, let $\ell:\R^d\times \mathcal{X}\times \mathcal{Y}\mapsto \R$ be a differentiable loss function, where $d$ is the number of model parameters. Let $\mathcal{S}\subset [n]$ be a randomly chosen mini-batch of size $B$.
The empirical loss over the mini-batch $\mathcal{S}$ is defined as $ \mathcal{L}_{\mathcal{S}}(\mathbf{w})=\sum_{i\in \mathcal{S}} \ell(\mathbf{w};\mathbf{x}_i,y_i)/B$, where $w$ parameterizes the neural network. With this notation in place, the SAM loss function is defined as~\cite{sampaper}:
\begin{equation}\label{sam-loss}
    \mathcal{L}^{SAM}_{\mathcal{S}}(\mathbf{w})=\max_{\|\mathbf{\epsilon}\|_p\leq \rho}\mathcal{L}_{\mathcal{S}}(\mathbf{w}+\mathbf{\epsilon})
\end{equation}
for some $p\geq 1$. In this work, we use $p=2$. In practice, however, the maximization step in~\eqref{sam-loss} cannot be done in closed form. Hence, authors in \cite{sampaper} use a first-order approximation to $\mathcal{L}_S$ to simplify~\eqref{sam-loss} as
\begin{equation}\label{sam-loss-linear}
  \mathcal{L}^{SAM}_{\mathcal{S}}(\mathbf{w})\approx\max_{\|\mathbf{\epsilon}\|_2\leq \rho}\mathcal{L}_{\mathcal{S}}(\mathbf{w})+\mathbf{\epsilon}^{T} \nabla\mathcal{L}_S(\mathbf{w}).  
\end{equation}
It is easy to see that the maximum in Problem~\eqref{sam-loss-linear} is achieved for
\begin{equation}
    \hat{\mathbf{\epsilon}}=\rho \nabla\mathcal{L}_{\mathcal{S}}(\mathbf{w})/\|\nabla\mathcal{L}_{\mathcal{S}}(\mathbf{w})\|_2.
\end{equation}
As a result, $\mathcal{L}_{\mathcal{S}}^{SAM}\approx \mathcal{L}_{\mathcal{S}}(w+\hat{\epsilon})$. This leads to the gradient  
$$\nabla\mathcal{L}_{\mathcal{S}}^{SAM}(\mathbf{w})\approx\nabla_{\mathbf{w}}\left[\mathcal{L}_{\mathcal{S}}(\mathbf{w}+\hat{\mathbf{\epsilon}})\right]=\frac{\partial(\mathbf{w}+\hat{\mathbf{\epsilon}})}{\partial \mathbf{w}}\nabla\mathcal{L}_{\mathcal{S}}(\mathbf{w}+\hat{\mathbf{\epsilon}}).
$$
However, calculating $\partial(w+\hat{\epsilon})/\partial w$ involves second-order terms that require access to Hessian, which can be computationally inefficient in practice. Thus, by ignoring the second-order terms in the above approximation, the gradient of the SAM loss can be approximated as~\cite{sampaper}:
\begin{equation}\label{sam-gradient}
    \nabla \mathcal{L}^{SAM}_{\mathcal{S}}(\mathbf{w}) \approx \nabla \mathcal{L}_{\mathcal{S}}(\mathbf{w}+\rho \nabla\mathcal{L}_{\mathcal{S}}(\mathbf{w})/\|\nabla\mathcal{L}_{\mathcal{S}}(\mathbf{w})\|_2)
\end{equation}
which is used in the SAM algorithm (for example, in conjunction with SGD). We refer to~\citet{sampaper} for more details and intuitions about SAM. We call the inner gradient calculations on the right-hand side of~\eqref{sam-gradient} as the SAM ascent step and the outer gradient calculations as the gradient step. mSAM~\cite{sampaper} is a variation of the SAM algorithm. In general, for mSAM, a mini-batch of data  $\mathcal{S}$ is further divided into $m$ smaller disjoint shards (aka ``micro-batches''), such as $\mathcal{S}_1,\cdots,\mathcal{S}_m$ where $\cup_{j=1}^m \mathcal{S}_j=\mathcal{S}$. For simplicity, we assume $|\mathcal{S}_1|=\cdots=|\mathcal{S}_m|=|\mathcal{S}|/m$ although such an assumption is not necessary in general. The mSAM loss is a variation of the SAM loss, defined as:
\begin{equation}\label{msam-loss}
     \mathcal{L}^{mSAM}_{\mathcal{S}}(\mathbf{w}) = \frac{1}{m}\sum_{j=1}^m \max_{\|\mathbf{\epsilon}^{(j)}\|_2\leq \rho} \mathcal{L}_{\mathcal{S}_j}(\mathbf{w}+\mathbf{\epsilon}^{(j)}).
\end{equation}
Intuitively, mSAM is a version of SAM where the ascent step (or the weight perturbation) of SAM is done independently on each micro-batch using different $\epsilon^{(j)}$, instead of using an average perturbation such as $\mathbf{\epsilon}$ for all micro-batches. The mSAM gradient can thereby be derived as:
\begin{equation}\label{msam-gradient}
        \nabla \mathcal{L}^{mSAM}_{\mathcal{S}}(\mathbf{w}) = \frac{1}{m}\sum_{j=1}^m \nabla \mathcal{L}_{\mathcal{S}_j}(\mathbf{w}+\rho \frac{\nabla\mathcal{L}_{\mathcal{S}_j}(\mathbf{w})}{\|\nabla\mathcal{L}_{\mathcal{S}_j}(\mathbf{w})\|_2}),
\end{equation}
where~\eqref{msam-gradient} is a first-order approximation to the gradient of~\eqref{msam-loss}. We also note that the loss~\eqref{msam-loss} is  related to the mSAM definition of~\citet{icmlPaper}. See Table~\ref{sam-table} for a side-by-side comparison of SAM and mSAM and their different implementations.

\begin{table*}[t]
\renewcommand{\arraystretch}{1.2}
\begin{center}
\begin{sc}
\begin{tabular}{ccccc}
\toprule
 & SAM &  \multicolumn{3}{c}{mSAM} \\
 \midrule
 Loss function & $\max_{\|\mathbf{\epsilon}\|_2\leq \rho} \sum_{i=1}^m  \mathcal{L}_{\mathcal{S}_i}(\mathbf{w}+\mathbf{\epsilon})/m$ & \multicolumn{3}{c}{$\sum_{i=1}^m \max_{\|\mathbf{\epsilon}^{(i)}\|_2\leq \rho} \mathcal{L}_{\mathcal{S}_i}(\mathbf{w}+\mathbf{\epsilon}^{(i)})/m$}\\
 Ascent step & $\hat{\mathbf{\epsilon}}\propto \rho {\sum_{i=1}^m \nabla\mathcal{L}_{\mathcal{S}_i}(\mathbf{w})}/m$ &  \multicolumn{3}{c}{$\hat{\mathbf{\epsilon}}^{(i)}\propto\rho{ \nabla\mathcal{L}_{\mathcal{S}_i}(\mathbf{w})},~i\in[m]$} \\
  Gradient & $g= \sum_{i=1}^m\nabla \mathcal{L}_{\mathcal{S}_i}(\mathbf{w}+\hat{\mathbf{\epsilon}})/m$ &  \multicolumn{3}{c}{$g= \sum_{i=1}^m \nabla \mathcal{L}_{\mathcal{S}_i}(\mathbf{w}+\hat{\mathbf{\epsilon}}^{(i)})/m$} \\
 Implementations & [FKMN] & [FKMN] & [AF] & Ours \\
Possible $m$ values & - & $\#$ of accelerators & flexible & flexible \\
Processor support & Multiple & Multiple & Single & Multiple\\
\bottomrule
\end{tabular}
\end{sc}
\end{center}
\caption{Comparison of SAM with Different mSAM Implementations. [FKMN] refers to~\citet{sampaper} and [AF] refers to~\citet{icmlPaper}.}
\label{sam-table}
\end{table*}

An important distinction between our work and prior work is that we treat $m$ as a model hyper-parameter to improve generalization. In particular, in mSAM implementation of~\citet{sampaper}, the value of $m$ is fixed to the number of hardware accelerators, micro-batch $i$ is the part of the data that is loaded onto accelerator $i$, and each accelerator uses a separate perturbation, simulating the effect of mSAM. With this implementation, $m$ is an artefact of the hardware setup. On the other hand, the analysis of~\citet{icmlPaper} mostly concerns the value $m = |\mathcal{S}| := B$ where $\mathcal{S}$ is the mini-batch under consideration, and we denote its size as $B$ for ease of use in latter sections. In contrast, we consider a wide range of values for $m$ in our experiments. This offers the flexibility to choose an appropriate value of $m$ that leads to a better generalization performance. Moreover, our implementation supports any single/multi-accelerator setup and allows the user to set an appropriate value of $m$.

\section{Justification of mSAM}
\label{sec:theory}

\def\M{{\cal{M}}}
\def\lmax{\lambda_{\max}}
\newcommand{\defeq}{\overset{\mathrm{def}}{=\joinrel=}}
\def\etahat{\hat{\eta}}
\def\Ahat{{\mathbf{X}}}

\newtheorem{theorem}{Theorem}[section]
\newtheorem{assumption}{Assumption}[section]
\newtheorem{lemma}{Lemma}[section]
\newtheorem{remark}{Remark}[section]

Over-parameterized DNNs have a continuum (a manifold of large size) of minima, $\M$ due to the number of parameters being much larger than the number of training examples. An intriguing property of DNNs is that different minima in $\M$ have different sharpness values. When trained using SGD with a large learning rate and small batch size, these DNNs have an implicit ability to move towards minima which are flat - or equivalently, \textit{less sharp}, with sharpness expressed as the spectral norm of the Hessian\footnote{Sharpness can also be quantified in other ways. See \cite{wucw18, wuws22, jasf20, cogk22} for some details.}~\cite{wucw18, wuws22}. \citet{jasf20, cokl21} establish a theoretical framework to explain this phenomenon with strong empirical backing. Accordingly, this approach is followed by most recent theoretical papers on analyzing the properties of GD, SGD, and SAM \cite{cokl21, cogk22, balb22, weml22, ujtk22}. In this section, we review a generalized version of such analyses in SGD and extend it to SAM and mSAM. This extension of the well-studied framework implies that SAM improves flatness over SAM, and mSAM improves flatness even further than SAM. The proofs of all results in this section are given in appendix~\ref{sec:proofs3}.

\subsection{Analysis of Linear Stability}
\label{subsec:linstab}

Our analysis will focus on the general minibatch stochastic dynamic update of the following form:
\begin{equation} \label{eq:wup}
\mathbf{w}_{(t+1)} = \mathbf{w}_t - \eta \; \mathbf{d}_{\mathcal{S}}(\mathbf{w}_t) 
\end{equation}
The trio of methodologies under scrutiny — SGD, SAM and mSAM — all derive from the common foundation of $\nabla\mathcal{L}_{\mathcal{S}}$, yet they adopt distinct formulations for $\mathbf{d}_{\mathcal{S}}$. Given their shared objective to minimize the training loss, each minimum within the continuum $\mathcal{M}$ can be posited as an equilibrium point of the dynamic system (\ref{eq:wup}). Let $\mathbf{w}^*$ represent a minimum within this continuum. It is anticipated that the stability attributes of trajectories surrounding $\mathbf{w}^*$ will depend on hyperparameters, specifically the learning rate $\eta$ and the mini-batch size $B$. Furthermore, due to inherent differences in their dynamics, the three methods mentioned above will manifest varying stability characteristics even with identical values of $\eta$ and $B$. Establishing these properties hinges upon an extension of an approach to stability introduced by \citet{wucw18}.

In particular, \citet{wucw18} use a linear approximation of the gradient (equivalently, a quadratic approximation of the loss) around $\mathbf{w}^*$:
\begin{equation} \label{lingapprox}
\nabla\mathcal{L}_i(\mathbf{w}) \approx \mathbf{H}_i(\mathbf{w}^*) (\mathbf{w} - \mathbf{w}^*),
\end{equation}
where the subscript $i$ corresponds to the  $i^{\text{th}}$ training example.
For the sake of simplicity in notation, and without sacrificing generality, we assume that $\mathbf{w}^* = \mathbf{0}$. Incorporating the approximation (\ref{lingapprox}) into the construction of $\mathbf{d}$ as outlined in (\ref{eq:wup}) yields a reformulation of the following manner:
\begin{equation}\label{linup}
\mathbf{w}_{(t+1)} = (\mathbf{I} - \eta \mathbf{J}_{\mathcal{S}}(\mathbf{w}^*) ) \; \mathbf{w}_t
\end{equation}
The comprehensive derivation of $\mathbf{J}_{\mathcal{S}}(\mathbf{w}^*)$ for all three methods are presented in section~\ref{subsec:stmethods}, with further elaboration available in appendix~\ref{sec:proofs3}. 
We assume that $\mathbf{J}_{\mathcal{S}}(\mathbf{w}^*)$ is symmetric and positive semi-definite (PSD). The corresponding stability concept as outlined by \citet{wucw18} is articulated by examining the following expression:
\begin{equation}
    \E\|\mathbf{w}_{(t+1)}\|^2 = \mathbf{w}_t^T \E [(\mathbf{I} - \eta \mathbf{J}_{\mathcal{S}}(\mathbf{w}^*) )^2] \; \mathbf{w}_t
\end{equation}
To ensure that $\E\|\mathbf{w}_{(t+1)}\|^2 \le \|\mathbf{w}_t\|^2 \; \forall \mathbf{w}_t$, we must have $\lambda_1 (\E [(\mathbf{I} - \eta \mathbf{J}_{\mathcal{S}}(\mathbf{w}^*) )^2] ) \le 1$, where $\lambda_1(\mathbf{A})$ is the spectral norm of matrix $\mathbf{A}$.

\begin{lemma}
$\E [(\mathbf{I} - \eta \mathbf{J}_{\mathcal{S}}(\mathbf{w}^*) )^2] = (\mathbf{I} - \eta \mathbf{J}^* )^2 + \eta^2 \mathbf{\Sigma}$, where $\mathbf{\Sigma}= \E [(\mathbf{J}_{\mathcal{S}}(\mathbf{w}^*) - \mathbf{J}^*)^2]$ and $\mathbf{J}^* = \E [\mathbf{J}_{\mathcal{S}}(\mathbf{w}^*)]$.
\end{lemma}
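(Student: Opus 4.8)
The plan is to treat this as a matrix bias--variance decomposition, which is purely algebraic and uses only linearity of expectation together with the fact that $\mathbf{J}^* = \E[\mathbf{J}_{\mathcal{S}}(\mathbf{w}^*)]$ is deterministic. Write $\mathbf{J} := \mathbf{J}_{\mathcal{S}}(\mathbf{w}^*)$ for brevity. First I would expand the square, using that $\mathbf{I}$ commutes with every matrix, to get $(\mathbf{I} - \eta\mathbf{J})^2 = \mathbf{I} - 2\eta\mathbf{J} + \eta^2 \mathbf{J}^2$. Applying $\E[\cdot]$ termwise and using linearity of expectation then gives $\E[(\mathbf{I} - \eta\mathbf{J})^2] = \mathbf{I} - 2\eta\mathbf{J}^* + \eta^2\,\E[\mathbf{J}^2]$.

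The second step is to express $\E[\mathbf{J}^2]$ in terms of $\mathbf{\Sigma}$ and $\mathbf{J}^*$. Expanding $\mathbf{\Sigma} = \E[(\mathbf{J} - \mathbf{J}^*)^2] = \E[\mathbf{J}^2 - \mathbf{J}\mathbf{J}^* - \mathbf{J}^*\mathbf{J} + (\mathbf{J}^*)^2]$, and pulling the deterministic matrix $\mathbf{J}^*$ out of the expectations in the cross terms, one has $\E[\mathbf{J}\mathbf{J}^*] = \E[\mathbf{J}]\,\mathbf{J}^* = (\mathbf{J}^*)^2$ and likewise $\E[\mathbf{J}^*\mathbf{J}] = (\mathbf{J}^*)^2$. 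Hence the two cross terms and the constant term collapse to $-(\mathbf{J}^*)^2$, giving $\mathbf{\Sigma} = \E[\mathbf{J}^2] - (\mathbf{J}^*)^2$, i.e.\ $\E[\mathbf{J}^2] = \mathbf{\Sigma} + (\mathbf{J}^*)^2$.

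Finally I would substitute this back: $\E[(\mathbf{I} - \eta\mathbf{J})^2] = \mathbf{I} - 2\eta\mathbf{J}^* + \eta^2\big(\mathbf{\Sigma} + (\mathbf{J}^*)^2\big) = \big(\mathbf{I} - 2\eta\mathbf{J}^* + \eta^2(\mathbf{J}^*)^2\big) + \eta^2\mathbf{\Sigma}$, and recognize the grouped terms as $(\mathbf{I} - \eta\mathbf{J}^*)^2$, which yields the claim.

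There is essentially no hard step here; the only point requiring mild care is the non-commutativity of matrix products, so one must not assume $\mathbf{J}\mathbf{J}^* = \mathbf{J}^*\mathbf{J}$. The argument nonetheless goes through because $\mathbf{J}^*$ is constant, so both cross terms have the same expectation $(\mathbf{J}^*)^2$ regardless of ordering; in particular the symmetry/PSD hypothesis on $\mathbf{J}_{\mathcal{S}}(\mathbf{w}^*)$ is not even needed for this lemma (it is used later for the spectral-norm stability conclusions). I would also remark that the identity is valid for the conditional expectation over the random mini-batch $\mathcal{S}$ at a fixed $\mathbf{w}^*$, which is the quantity appearing in the displayed stability condition.
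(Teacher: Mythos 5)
Your proof is correct and follows essentially the same route as the paper: expand the square, apply linearity of expectation, and use the matrix bias--variance identity $\E[\mathbf{J}^2] = (\mathbf{J}^*)^2 + \mathbf{\Sigma}$ (the paper's Lemma A.1) before regrouping. Your added observation that the symmetry/PSD hypothesis is not actually needed for this particular identity is accurate but does not change the argument.
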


Therefore, stability holds iff
\begin{equation} \label{eq:stcheck1}
    \lambda_1((\mathbf{I} - \eta \mathbf{J}^* )^2 + \eta^2 \mathbf{\Sigma}) \le 1
\end{equation}
For the full batch case, $\mathbf{\Sigma}=\mathbf{0}$, rendering the verification of 
(\ref{eq:stcheck1}) straightforward (see Lemma 3.2 below). However, for the mini-batch case, due to the complexity of analyzing (\ref{eq:stcheck1}), \citet{wucw18} opt to substitute the stability evaluation, (\ref{eq:stcheck1}), with the following necessary condition:
\begin{equation} \label{eq:neccond}
    S_1: \lambda_1((\mathbf{I} - \eta \mathbf{J}^* )^2) \le 1 \mbox{ and } S_2: \eta^2\lambda_1(\mathbf{\Sigma}) \le 1,
\end{equation}
which we consider hereafter to define stability. Since $\lambda_1(\mathbf{A} + \mathbf{B}) \geq \max\{\lambda_1(\mathbf{A}), \lambda_1(\mathbf{B})\}$ holds true for any pair of symmetric PSD matrices $\mathbf{A}$ and $\mathbf{B}$, the condition stated in (\ref{eq:neccond}) can be considered as a relaxation of the condition in (\ref{eq:stcheck1}).

\begin{lemma}
\label{eos-sgd}
$S_1$ is equivalent to checking if $\eta \lambda_1(\mathbf{J}^*) \le 2$.
\end{lemma}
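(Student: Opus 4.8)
The plan is to diagonalize and reduce the matrix condition $S_1$ to a scalar condition on each eigenvalue of $\mathbf{J}^*$. First I would invoke the standing assumption that $\mathbf{J}^* = \E[\mathbf{J}_{\mathcal{S}}(\mathbf{w}^*)]$ is symmetric and PSD, so by the spectral theorem it has an orthonormal eigenbasis with real eigenvalues $0 \le \mu_d \le \cdots \le \mu_1$, where $\mu_1 = \lambda_1(\mathbf{J}^*)$. Since $\mathbf{I} - \eta\mathbf{J}^*$ is a polynomial in $\mathbf{J}^*$, it is simultaneously diagonalizable in that same basis, and hence so is $(\mathbf{I} - \eta\mathbf{J}^*)^2$; its eigenvalues are exactly $(1-\eta\mu_i)^2$ for $i = 1,\dots,d$.

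Next, because $(\mathbf{I} - \eta\mathbf{J}^*)^2$ is symmetric PSD, its spectral norm coincides with its largest eigenvalue, so $\lambda_1\big((\mathbf{I}-\eta\mathbf{J}^*)^2\big) = \max_i (1-\eta\mu_i)^2$. Therefore $S_1$ holds if and only if $(1-\eta\mu_i)^2 \le 1$ for every $i$, i.e. $|1 - \eta\mu_i| \le 1$, i.e. $0 \le \eta\mu_i \le 2$ for all $i$. Then I would note that the lower bound $\eta\mu_i \ge 0$ is vacuous here, since $\eta > 0$ is the learning rate and $\mu_i \ge 0$ by PSD-ness, so the pair of constraints collapses to $\eta\mu_i \le 2$ for all $i$. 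Because the eigenvalues are ordered, this is equivalent to the single inequality $\eta\mu_1 \le 2$, that is, $\eta\lambda_1(\mathbf{J}^*) \le 2$, which gives both directions of the claimed equivalence.

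There is no serious obstacle; the one thing to be careful about is that the PSD hypothesis is genuinely used twice — once to identify the spectral norm of $(\mathbf{I}-\eta\mathbf{J}^*)^2$ with its top eigenvalue (true for any symmetric matrix since the square is PSD, but worth stating), and once to discard the lower bound $\eta\mu_i \ge 0$ and reduce the spectrum-wide condition to a statement about $\lambda_1$ alone. Were $\mathbf{J}^*$ merely symmetric with a possible negative eigenvalue, $S_1$ would instead read $-2 \le \eta\mu_i \le 2$ over the whole spectrum and would not reduce to a condition on $\lambda_1(\mathbf{J}^*)$; so the PSD assumption stated earlier is exactly what licenses the clean characterization.
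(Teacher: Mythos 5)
Your proof is correct and follows essentially the same route as the paper's: both reduce $S_1$ to the condition that every eigenvalue $1-\eta\mu_i$ of $\mathbf{I}-\eta\mathbf{J}^*$ lies in $[-1,1]$, observe that the upper bound is automatic from PSD-ness of $\mathbf{J}^*$ (and $\eta>0$), and extract $\eta\lambda_1(\mathbf{J}^*)\le 2$ from the lower bound. Your write-up is simply a more explicit version of the paper's two-line argument.
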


The following result is a corollary of (\ref{eq:neccond}) and Lemma 3.2, and it plays a pivotal role in substantiating the principal conclusions of subsection~\ref{subsec:stmethods}.

\begin{lemma}
The stability condition of (\ref{eq:neccond}) will be violated (i.e., the equilibrium point, $\mathbf{w}^*$ is considered to be unstable for (\ref{linup})) iff
\begin{equation} \label{eq:instab}
    \eta \lambda_1(\mathbf{J}^*) > 2 \mbox{ or } \eta^2\lambda_1(\mathbf{\Sigma}) > 1.
\end{equation}
\end{lemma}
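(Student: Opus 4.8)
The plan is to recognize that this lemma is a purely logical consequence of the definition of stability in (\ref{eq:neccond}) together with Lemma~\ref{eos-sgd}, so the argument is short. By (\ref{eq:neccond}), the equilibrium $\mathbf{w}^*$ is stable for (\ref{linup}) precisely when \emph{both} $S_1$ and $S_2$ hold. Hence stability is violated exactly when $S_1$ fails \emph{or} $S_2$ fails — this is just De Morgan's law applied to the conjunction $S_1 \wedge S_2$. So the whole proof amounts to rewriting ``$S_1$ fails or $S_2$ fails'' in the explicit form of (\ref{eq:instab}).

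Next I would translate each failure into the stated condition. For $S_2$ this is immediate: the negation of $\eta^2 \lambda_1(\mathbf{\Sigma}) \le 1$ is $\eta^2 \lambda_1(\mathbf{\Sigma}) > 1$. For $S_1$, I would invoke Lemma~\ref{eos-sgd}, which states that $S_1$ is equivalent to $\eta \lambda_1(\mathbf{J}^*) \le 2$; therefore the negation of $S_1$ is $\eta \lambda_1(\mathbf{J}^*) > 2$. Substituting both equivalences into ``$S_1$ fails or $S_2$ fails'' yields exactly ``$\eta \lambda_1(\mathbf{J}^*) > 2$ or $\eta^2 \lambda_1(\mathbf{\Sigma}) > 1$'', which is (\ref{eq:instab}), and since every implication used is an ``iff'' the equivalence is two-sided.

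If one wants a self-contained argument not citing Lemma~\ref{eos-sgd}, the only non-routine step is the equivalence $S_1 \iff \eta\lambda_1(\mathbf{J}^*) \le 2$: since $\mathbf{J}^*$ is symmetric PSD with eigenvalues $0 \le \mu_d \le \cdots \le \mu_1 = \lambda_1(\mathbf{J}^*)$, the matrix $(\mathbf{I}-\eta\mathbf{J}^*)^2$ has eigenvalues $(1-\eta\mu_i)^2$, so $\lambda_1((\mathbf{I}-\eta\mathbf{J}^*)^2) = \max_i (1-\eta\mu_i)^2 \le 1$ iff $|1-\eta\mu_i| \le 1$ for all $i$ iff $0 \le \eta\mu_i \le 2$ for all $i$, which by $\mu_i \ge 0$ reduces to $\eta\mu_1 \le 2$. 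I do not anticipate any genuine obstacle here; the only things to be careful about are the standing symmetry and PSD assumptions on $\mathbf{J}^*$ and $\mathbf{\Sigma}$ (so that $\lambda_1(\cdot)$ really is the largest eigenvalue and the earlier inequality $\lambda_1(\mathbf{A}+\mathbf{B}) \ge \max\{\lambda_1(\mathbf{A}),\lambda_1(\mathbf{B})\}$ applies), and tracking the ``iff'' faithfully through both the De Morgan step and the equivalence of Lemma~\ref{eos-sgd}.
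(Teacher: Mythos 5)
Your proposal is correct and matches the paper's own argument: the paper likewise treats this lemma as an immediate corollary of the stability definition in (\ref{eq:neccond}) together with Lemma~\ref{eos-sgd}, i.e., negating the conjunction $S_1 \wedge S_2$ and substituting $\eta\lambda_1(\mathbf{J}^*) \le 2$ for $S_1$. Your optional self-contained verification of $S_1 \iff \eta\lambda_1(\mathbf{J}^*)\le 2$ via the eigenvalues $(1-\eta\mu_i)^2$ is also exactly the paper's proof of that lemma.
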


Several previous works \cite{jasf20, jaka17, wuws22, liwa22} empirically demonstrate and use an alignment assumption between $\mathbf{J}^*$ and $\mathbf{\Sigma}$. We make a similar assumption as given below:

\begin{assumption}\label{assumption:alignment}
$\mathbf{J}^* = \beta \mathbf{\Sigma}$, where $\beta>0$ is a constant that depends on hyper-parameters, $B$ and $n$.
\end{assumption}

\citet{jasf20} also point out that $\beta$ depends on $B$ and the number of training examples only. Given assumption 3.1, the conditions for stability, instability and edge (boundary) can be written more simply as:
\begin{eqnarray}
    & \mbox{Stability Condition:  } \alpha \lambda_1(\mathbf{J}^*) \le 1 \label{eq:newstab} \\
    & \mbox{Instability Condition:  } \alpha \lambda_1(\mathbf{J}^*) > 1 \label{eq:newinstab} \\
    & \mbox{Edge of Stability:  } \alpha \lambda_1(\mathbf{J}^*) = 1 \label{eq:newedge} 
\end{eqnarray}
where $\alpha = \max\{\frac{\eta}{2}, \eta^2\beta \}$.

\def\Hbar{\bar{\mathbf{H}}}

\subsection{Application to SGD, SAM and mSAM}
\label{subsec:stmethods}

We delve into applying the findings from section~\ref{subsec:linstab} to SGD, SAM and mSAM. For differentiation, we employ subscripts 1, 2, and 3 to denote SGD, SAM, and mSAM. 
Since $\mathbf{w}^*$ is a minimum, we assume that, for each example $i$, the Hessian, $\mathbf{H}_i(\mathbf{w}^*)$ is symmetric and positive semi-definite (PSD). Given this foundation, all other matrices we engage with in the subsequent analysis also turn out to be PSD. For SGD, \citet{wucw18} show that:  
\begin{equation} \label{eq:jssgd}
    \mathbf{J}_1^* = \Hbar^* \mbox{ and } \mathbf{\Sigma}_1 = \E [(\mathbf{H}_{\mathcal{S}}^* - \Hbar^*)^2] = \mathbf{V}(\mathbf{H}_{\mathcal{S}}^*),
\end{equation}
where $\mathbf{H}_{\mathcal{S}}^*$ is the Hessian of $\mathcal{L}_{\mathcal{S}}$ at $\mathbf{w}^*$, $\Hbar^*$ is the Hessian of the full batch mean loss at $\mathbf{w}^*$ and $\mathbf{V}(\mathbf{X}_{\mathcal{S}}) = \E[(\mathbf{X}_{\mathcal{S}} - \E[\mathbf{X}_{\mathcal{S}}])^2]$ denotes the variance of $\mathbf{X}_{\mathcal{S}}$. 

\begin{assumption}
For analysis with SAM and mSAM, we leave out the normalization terms in (\ref{sam-gradient}) and (\ref{msam-gradient}).    
\end{assumption}
In a recent study, \citet{agda23} analyze the stability characteristics of the full batch unnormalized SAM, as described in (\ref{sam-gradient}), where $\mathcal{S}$ signifies the complete training set. Notably, they too exclude the normalizer term $\|\nabla\mathcal{L}_{\mathcal{S}}(\mathbf{w})\|_2$ to simplify the analysis. However, unlike \citet{agda23}, we do not require the full batch assumption. 

Let us now apply the general linear stability analysis that we develop in section~\ref{subsec:linstab} to SAM and mSAM; for them, $\mathbf{d}_{\mathcal{S}}$ is given by
\begin{equation}\label{dsam1}
\mathbf{d}_{2,\mathcal{S}}(\mathbf{w}) = \mathbf{d}_{1,\mathcal{S}}(\mathbf{w} + \rho \mathbf{d}_{1,\mathcal{S}}(\mathbf{w})),
\end{equation}
\begin{equation}\label{dmSAM1}
\mathbf{d}_{3,\mathcal{S}}(\mathbf{w}) = \frac{1}{m} \sum_{j=1}^m \mathbf{d}_{2,\mathcal{S}_j}(\mathbf{w}).
\end{equation}
where $\mathbf{d}_{1,\mathcal{S}}$ is the SGD gradient given by
\begin{equation}\label{dsgd}
\mathbf{d}_{1,\mathcal{S}}(\mathbf{w}) = \frac{1}{|\mathcal{S}|} \sum_{i\in \mathcal{S}} \nabla\mathcal{L}(\mathbf{w}; e_i),
\end{equation}
and $e_i=(\mathbf{x}_i,y_i)$ denotes the $i$-th training example.
It is clear from (\ref{dsam1}) and (\ref{dmSAM1}) that the linear approximations of SAM and mSAM, i.e., the determination of $\mathbf{J}_{1, \mathcal{S}}(\mathbf{w}^*)$ and $\mathbf{J}_{2, \mathcal{S}}(\mathbf{w}^*)$ depend on the linearization of $\nabla \mathbf{d}_{1,{\mathcal{S}}}$ which is given by:
\begin{equation}
    \nabla \mathbf{d}_{1,{\mathcal{S}}} = \mathbf{H}_{\mathcal{S}}^* \mathbf{w} \Rightarrow \mathbf{J}_{1,\mathcal{S}}(\mathbf{w}^*) = \mathbf{H}_{\mathcal{S}}^* \defeq \frac{1}{|\mathcal{S}|} \sum_{i\in \mathcal{S}} \mathbf{H}_i(\mathbf{w}^*)
\end{equation}
When the details are worked out (see appendix~\ref{sec:proofs3}) we obtain expressions for $\mathbf{J}_{1, \mathcal{S}}(\mathbf{w}^*)$ and $\mathbf{J}_{2, \mathcal{S}}(\mathbf{w}^*)$ given by the following lemma.

\begin{lemma}
(a) $\mathbf{J}^*$ for SAM and mSAM are given by
\begin{eqnarray}
    & \mathbf{J}_2^* = \Hbar^* + \rho (\Hbar^*)^2 + \rho\mathbf{\Sigma}_1, \label{eq:J2} \\
    & \mathbf{J}_3^* = \Hbar^* + \rho (\Hbar^*)^2 + \rho\mathbf{\Sigma}_1 + \mathbf{\Omega}, \label{eq:J3}
\end{eqnarray}
where 
\begin{equation} \label{eq:gamom}
    \mathbf{\Omega} = \E\left[\frac{\rho}{m} \sum_{j=1}^m (\mathbf{H}_{\mathcal{S}_j}^* - \mathbf{H}_{\mathcal{S}}^*)^2\right].
\end{equation}
(b) $\mathbf{\Sigma}$ for SAM and mSAM are given by
\begin{eqnarray}
    & \mathbf{\Sigma}_2 = \E[(\mathbf{H}_{\mathcal{S}}^* + \rho (\mathbf{H}_{\mathcal{S}}^*)^2 - \mathbf{J}_2^*)^2] \label{eq:sig2} \\
    & \mathbf{\Sigma}_3 = \E [(\mathbf{J}_{3,\mathcal{S}}(\mathbf{w}^*) - \mathbf{J}_3^*)^2] \label{eq:sig3}
\end{eqnarray}
where
\begin{eqnarray*}
    & \mathbf{J}_{3,\mathcal{S}}(\mathbf{w}^*) = \mathbf{H}_{\mathcal{S}}^* + \rho (\mathbf{H}_{\mathcal{S}}^*)^2 + \frac{\rho}{m} \sum_{j=1}^m (\mathbf{H}_{\mathcal{S}_j}^* - \mathbf{H}_{\mathcal{S}}^*)^2. \label{eq:x3}
\end{eqnarray*}
(c) Equivalently, $\mathbf{\Sigma}_2$ and $\mathbf{\Sigma}_3$ can also be written as matrix variances:
\begin{eqnarray}
    & \mathbf{\Sigma}_2 = \mathbf{V}(\mathbf{H}_{\mathcal{S}}^* + \rho (\mathbf{H}_{\mathcal{S}}^*)^2) \label{eq:sigvar2} \\
    & \mathbf{\Sigma}_3 = \mathbf{V}(\mathbf{J}_{3,\mathcal{S}}(\mathbf{w}^*)) \label{eq:sigvar3}
\end{eqnarray}    
\end{lemma}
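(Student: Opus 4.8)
The plan is to push the linear-gradient approximation (\ref{lingapprox}) (with $\mathbf{w}^*=\mathbf{0}$) together with Assumption 3.2 (dropping the normalizers) through the composition/averaging structure of SAM and mSAM, so that each update becomes an \emph{exact} linear map and the matrices $\mathbf{J}_{\cdot,\mathcal{S}}(\mathbf{w}^*)$, $\mathbf{J}^*_{\cdot}$, $\mathbf{\Sigma}_{\cdot}$ can simply be read off. Throughout I use that $\E$ denotes expectation over the random mini-batch $\mathcal{S}$ (and, for mSAM, over the random equal-sized partition $\mathcal{S}_1,\dots,\mathcal{S}_m$), and the two elementary facts that make all cross terms vanish: $\E[\mathbf{H}_{\mathcal{S}}^*-\Hbar^*]=\mathbf{0}$, and $\frac{1}{m}\sum_{j=1}^m \mathbf{H}_{\mathcal{S}_j}^* = \mathbf{H}_{\mathcal{S}}^*$ (disjointness, covering of $\mathcal{S}$, and equal sizes $|\mathcal{S}_j|=|\mathcal{S}|/m$).

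\textbf{SAM.} Starting from (\ref{dsam1}) and the known SGD linearization $\mathbf{d}_{1,\mathcal{S}}(\mathbf{w})=\mathbf{H}_{\mathcal{S}}^*\mathbf{w}$, the perturbed argument is $\mathbf{w}+\rho\,\mathbf{d}_{1,\mathcal{S}}(\mathbf{w})=(\mathbf{I}+\rho\mathbf{H}_{\mathcal{S}}^*)\mathbf{w}$, so $\mathbf{d}_{2,\mathcal{S}}(\mathbf{w})=\mathbf{H}_{\mathcal{S}}^*(\mathbf{I}+\rho\mathbf{H}_{\mathcal{S}}^*)\mathbf{w}$ and hence $\mathbf{J}_{2,\mathcal{S}}(\mathbf{w}^*)=\mathbf{H}_{\mathcal{S}}^*+\rho(\mathbf{H}_{\mathcal{S}}^*)^2$. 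Taking expectations and using the bias--variance split $\E[(\mathbf{H}_{\mathcal{S}}^*)^2]=(\Hbar^*)^2+\mathbf{\Sigma}_1$ — where the cross terms $\Hbar^*\E[\mathbf{H}_{\mathcal{S}}^*-\Hbar^*]$ and $\E[\mathbf{H}_{\mathcal{S}}^*-\Hbar^*]\Hbar^*$ vanish since $\Hbar^*$ is deterministic — gives (\ref{eq:J2}). Then $\mathbf{\Sigma}_2$ is exactly $\E[(\mathbf{J}_{2,\mathcal{S}}(\mathbf{w}^*)-\mathbf{J}_2^*)^2]$ by the definition of $\mathbf{\Sigma}$ in Lemma 3.1, which is (\ref{eq:sig2}); and since $\mathbf{J}_2^*=\E[\mathbf{J}_{2,\mathcal{S}}(\mathbf{w}^*)]$, this is the matrix variance $\mathbf{V}(\mathbf{H}_{\mathcal{S}}^*+\rho(\mathbf{H}_{\mathcal{S}}^*)^2)$, giving (\ref{eq:sigvar2}).

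\textbf{mSAM.} By (\ref{dmSAM1}), $\mathbf{d}_{3,\mathcal{S}}(\mathbf{w})=\frac1m\sum_{j=1}^m \mathbf{H}_{\mathcal{S}_j}^*(\mathbf{I}+\rho\mathbf{H}_{\mathcal{S}_j}^*)\mathbf{w}$, so $\mathbf{J}_{3,\mathcal{S}}(\mathbf{w}^*)=\frac1m\sum_j \mathbf{H}_{\mathcal{S}_j}^*+\frac{\rho}{m}\sum_j (\mathbf{H}_{\mathcal{S}_j}^*)^2$. The first sum is $\mathbf{H}_{\mathcal{S}}^*$ by the partition identity. For the second, set $\mathbf{F}_j\defeq \mathbf{H}_{\mathcal{S}_j}^*-\mathbf{H}_{\mathcal{S}}^*$ (so $\frac1m\sum_j \mathbf{F}_j=\mathbf{0}$) and expand $(\mathbf{H}_{\mathcal{S}_j}^*)^2=(\mathbf{H}_{\mathcal{S}}^*)^2+\mathbf{H}_{\mathcal{S}}^*\mathbf{F}_j+\mathbf{F}_j\mathbf{H}_{\mathcal{S}}^*+\mathbf{F}_j^2$; the two terms linear in $\mathbf{F}_j$ average to zero, leaving $\mathbf{J}_{3,\mathcal{S}}(\mathbf{w}^*)=\mathbf{H}_{\mathcal{S}}^*+\rho(\mathbf{H}_{\mathcal{S}}^*)^2+\frac{\rho}{m}\sum_j \mathbf{F}_j^2$, i.e. the displayed $\mathbf{J}_{3,\mathcal{S}}(\mathbf{w}^*)$ in part (b). Taking $\E$ and reusing the SAM computation on the first two terms while naming $\mathbf{\Omega}=\E[\frac{\rho}{m}\sum_j \mathbf{F}_j^2]$ produces (\ref{eq:J3}) and (\ref{eq:gamom}); and (\ref{eq:sig3}), (\ref{eq:sigvar3}) again follow from $\mathbf{\Sigma}=\E[(\mathbf{J}_{3,\mathcal{S}}(\mathbf{w}^*)-\mathbf{J}_3^*)^2]$ and $\mathbf{J}_3^*=\E[\mathbf{J}_{3,\mathcal{S}}(\mathbf{w}^*)]$, exactly as for SAM.

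\textbf{Main obstacle.} The computation is not deep, but one must be disciplined about matrix non-commutativity: every time a square of a sum of matrices is expanded the operands must stay ordered, and the vanishing of cross terms rests specifically on each such term being \emph{linear} in a mean-zero matrix ($\mathbf{H}_{\mathcal{S}}^*-\Hbar^*$ under $\E$, or $\mathbf{F}_j$ under the micro-batch average) multiplied on one side by a factor that is constant with respect to that particular averaging ($\Hbar^*$ is deterministic; $\mathbf{H}_{\mathcal{S}}^*$ is fixed once $\mathcal{S}$ is chosen). I would also note once at the outset that, since each $\mathbf{H}_i(\mathbf{w}^*)$ is PSD and the PSD cone is closed under averaging, conjugation, and squaring, all matrices appearing above are symmetric PSD, so the spectral-norm machinery of Lemmas 3.1--3.3 applies verbatim.
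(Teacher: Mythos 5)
Your proposal is correct and follows essentially the same route as the paper's proof: linearize $\mathbf{d}_{1,\mathcal{S}}$, $\mathbf{d}_{2,\mathcal{S}}$, $\mathbf{d}_{3,\mathcal{S}}$ to read off $\mathbf{J}_{i,\mathcal{S}}(\mathbf{w}^*)$, then take expectations using the matrix bias--variance identity $\E[\mathbf{X}^2]=\E[\mathbf{X}]^2+\E[(\mathbf{X}-\E[\mathbf{X}])^2]$, applied once over the random mini-batch and once over the uniform micro-batch index. The only cosmetic difference is that the paper isolates this identity as a standalone lemma (Lemma A.1) and cites it twice, whereas you re-derive it inline via the mean-zero decompositions $\mathbf{H}_{\mathcal{S}}^*-\Hbar^*$ and $\mathbf{F}_j=\mathbf{H}_{\mathcal{S}_j}^*-\mathbf{H}_{\mathcal{S}}^*$.
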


\begin{table*}[htbp]
\begin{sc}
\begin{center}
\begin{tabular}{ccccc}
\toprule
Dataset & Model & Vanilla & SAM &  mSAM 
\\ \midrule
\multirow{3}{*}{CIFAR 10} & ResNet50     & $95.45\pm 0.10$  & $96.09\pm0.11$ &  $96.40\pm0.06$\\ 
& WRN-28-10     & $95.92\pm0.12$ & $96.90\pm0.05$ &  $96.95\pm0.04$ \\
& ViT-B/16     & $97.68\pm0.03$  & $97.75\pm0.06$ &  $98.29\pm0.08$ \\\hline
\multirow{3}{*}{CIFAR 100} & ResNet50      & $80.68\pm0.13$ & $81.49\pm0.18$ & $83.37\pm0.10$\\
& WRN-28-10     &$81.01\pm 0.19$  & $82.93\pm0.13$ &  $84.07\pm0.06$\\
 & ViT-B/16      & $88.02\pm0.17$ & $88.75\pm0.07$ &   $89.00\pm 0.17$\\\hline

\multirow{4}{*}{ImageNet 1k} & ResNet50   & $76.35\pm0.09$ & $76.61\pm0.07$ & $76.79\pm0.09$ \\   
 & WRN50-2-bottleneck   & 78.04$\pm0.09$ & $78.50\pm0.09$ & $79.36\pm0.04$ \\   
 & ViT-S/32   & $64.53\pm0.72$ & $65.86\pm0.81$ & $66.76\pm0.22$ \\   
 \bottomrule

\end{tabular}
\end{center}
\end{sc}
\caption{Accuracy Results for CNN Architectures}
\label{cifar-table}
\end{table*}

It is imperative to acknowledge that the theoretical analysis of SAM-like methodologies with the normalization term presents a formidable undertaking, as highlighted by \citet{daas23}. Consequently, we defer this intricate endeavour to a subsequent phase.

\subsection{Implications on Sharpness}
It is apparent from the expressions in (\ref{eq:jssgd}), (\ref{eq:sigvar2}), and (\ref{eq:sigvar3}) that, in the progression from SGD to mSAM, the $\mathbf{\Sigma}$'s are computed as variances of matrices, with the inclusion of supplementary stochastic matrices at each stage. Consequently, one can anticipate the validity of the following theorem.

\begin{theorem}
\label{stability-sam-like-methods}
For any given $\eta$, $B$ and $\mathbf{w}^*$, (a) if SGD is unstable then SAM is unstable; and, (b) if SAM is unstable, then mSAM is unstable.     
\end{theorem}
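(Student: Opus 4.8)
The plan is to reduce the statement ``method $k$ is unstable at $\mathbf{w}^*$'' to a single scalar inequality for the top eigenvalue of $\mathbf{J}_k^*$, and then to observe that $\mathbf{J}_1^*,\mathbf{J}_2^*,\mathbf{J}_3^*$ form an increasing chain in the Loewner (PSD) order, which can be read off directly from Lemma~3.4. Under Assumption~\ref{assumption:alignment}, the instability characterization \eqref{eq:newinstab} states that method $k\in\{1,2,3\}$ is unstable iff $\alpha\,\lambda_1(\mathbf{J}_k^*)>1$, where $\alpha=\max\{\eta/2,\,\eta^2\beta\}$. The point that makes the whole argument go through is that, for fixed $\eta$ and $B$, this $\alpha$ is \emph{common} to SGD, SAM and mSAM, since $\beta$ depends only on $B$ and $n$ and not on the update rule, as noted by \citet{jasf20}. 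Hence it suffices to prove $\lambda_1(\mathbf{J}_1^*)\le\lambda_1(\mathbf{J}_2^*)\le\lambda_1(\mathbf{J}_3^*)$.

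To obtain this chain, I would first record from \eqref{eq:jssgd}, \eqref{eq:J2} and \eqref{eq:J3} the identities $\mathbf{J}_1^*=\Hbar^*$, $\mathbf{J}_2^*=\mathbf{J}_1^*+\rho\big((\Hbar^*)^2+\mathbf{\Sigma}_1\big)$ and $\mathbf{J}_3^*=\mathbf{J}_2^*+\mathbf{\Omega}$, with $\mathbf{\Omega}$ as in \eqref{eq:gamom}. Next I would check that each increment is PSD: $(\Hbar^*)^2\succeq\mathbf{0}$ since $\Hbar^*$ is symmetric; $\mathbf{\Sigma}_1=\E[(\mathbf{H}_{\mathcal{S}}^*-\Hbar^*)^2]\succeq\mathbf{0}$, being an expectation of a square of a symmetric matrix; and $\mathbf{\Omega}=\E[\frac{\rho}{m}\sum_{j=1}^m(\mathbf{H}_{\mathcal{S}_j}^*-\mathbf{H}_{\mathcal{S}}^*)^2]\succeq\mathbf{0}$ for the same reason together with $\rho/m>0$. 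Since $\rho>0$, this yields $\mathbf{0}\preceq\mathbf{J}_1^*\preceq\mathbf{J}_2^*\preceq\mathbf{J}_3^*$.

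The eigenvalue comparison then follows from the inequality already invoked in the text, namely $\lambda_1(\mathbf{A}+\mathbf{B})\ge\max\{\lambda_1(\mathbf{A}),\lambda_1(\mathbf{B})\}$ for symmetric PSD $\mathbf{A},\mathbf{B}$: taking $\mathbf{A}=\mathbf{J}_1^*$ and $\mathbf{B}=\rho\big((\Hbar^*)^2+\mathbf{\Sigma}_1\big)$ gives $\lambda_1(\mathbf{J}_2^*)\ge\lambda_1(\mathbf{J}_1^*)$, and taking $\mathbf{A}=\mathbf{J}_2^*$ and $\mathbf{B}=\mathbf{\Omega}$ gives $\lambda_1(\mathbf{J}_3^*)\ge\lambda_1(\mathbf{J}_2^*)$. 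Assembling the pieces: if SGD is unstable then $\alpha\lambda_1(\mathbf{J}_1^*)>1$, hence $\alpha\lambda_1(\mathbf{J}_2^*)\ge\alpha\lambda_1(\mathbf{J}_1^*)>1$ and SAM is unstable, which is (a); part (b) is word-for-word identical with the pair $(\mathbf{J}_1^*,\mathbf{J}_2^*)$ replaced by $(\mathbf{J}_2^*,\mathbf{J}_3^*)$.

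I expect the only genuinely delicate point to be justifying that $\alpha$ (equivalently $\beta$) is shared by the three methods; once that is granted, everything else is a one-line consequence of Lemma~3.4 plus monotonicity of $\lambda_1$ under the Loewner order. Should one wish to avoid Assumption~\ref{assumption:alignment}, one could instead argue from Lemma~3.3 directly: the chain $\mathbf{J}_1^*\preceq\mathbf{J}_2^*\preceq\mathbf{J}_3^*$ already propagates the clause $\eta\lambda_1(\mathbf{J}^*)>2$, but the clause $\eta^2\lambda_1(\mathbf{\Sigma})>1$ would additionally require $\lambda_1(\mathbf{\Sigma}_1)\le\lambda_1(\mathbf{\Sigma}_2)$, which is not transparent at the matrix level because the cross-covariance of $\mathbf{H}_{\mathcal{S}}^*$ and $(\mathbf{H}_{\mathcal{S}}^*)^2$ need not be PSD --- so routing the proof through the alignment assumption is the clean path.
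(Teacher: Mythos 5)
Your proof is correct and follows essentially the same route as the paper's: establish the Loewner chain $\mathbf{J}_1^*\preceq\mathbf{J}_2^*\preceq\mathbf{J}_3^*$ from the explicit expressions in (\ref{eq:jssgd}), (\ref{eq:J2}), (\ref{eq:J3}), and propagate the instability condition (\ref{eq:newinstab}) via monotonicity of $\lambda_1$ under addition of PSD terms. Your closing remark is a worthwhile addition: the paper asserts that the theorem does not rely on Assumption~\ref{assumption:alignment}, yet its one-line proof invokes (\ref{eq:newinstab}), which presupposes that assumption, and an assumption-free argument through Lemma~3.3 would indeed have to handle the $\eta^2\lambda_1(\mathbf{\Sigma})>1$ clause --- precisely the non-transparent comparison $\lambda_1(\mathbf{\Sigma}_1)\le\lambda_1(\mathbf{\Sigma}_2)$ you identify.
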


Thanks to the rigorous theoretical and empirical analyses by \citet{cokl21, cogk22, arlp22}, GD on neural network training is understood to reach and operate in a regime known as the \textit{Edge of Stability (EoS)}, where the maximum eigenvalue of the training loss Hessian hovers just above the value $2/\eta$ (see Lemma \ref{eos-sgd} above). Upon entering the \textit{EoS} region, the training loss exhibits non-monotonic behavior over brief time intervals, while consistently decreasing over longer periods. In the mini-batch setting, \citet{jasf20} conduct systematic experiments, applying the stability theory introduced by \citet{wucw18}, thereby identifying a corresponding \textit{EoS} behavior for SGD.

The assertion of Theorem \ref{stability-sam-like-methods} remains independent of the reliance on assumption~\ref{assumption:alignment}. Through the meticulous delineation of precise mathematical expressions, this theorem unveils the inherent propensity of mSAM to manifest greater susceptibility to instability in contrast to SAM, attributed to the incorporation of micro-batching. It is notable that SAM, owing to its adversarial step, exhibits a heightened level of instability as compared to SGD. For the mini-batch setting, these results are new. Combined with the way $\Hbar^*$ is involved in the expressions, theorem \ref{stability-sam-like-methods}, assumption~\ref{assumption:alignment}, and the EoS theory culminate in the following pivotal finding, which serves as a direct comparative analysis of the sharpness inherent in the three methods.

\begin{theorem}
\label{theorem:lambdaHess}
For any given $\eta$ and $B$, if $\mathbf{H}_1^*$, $\mathbf{H}_2^*$ and $\mathbf{H}_3^*$ denote, respectively the Hessians of SGD, SAM and mSAM at their edge of stability, then $\lambda_1(\mathbf{H}_1^*) \ge \lambda_1(\mathbf{H}_2^*) \ge \lambda_1(\mathbf{H}_3^*)$.
\end{theorem}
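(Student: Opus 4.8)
The plan is to reduce the claim to two ingredients: a Loewner (PSD) ordering $\mathbf{J}_1^* \preceq \mathbf{J}_2^* \preceq \mathbf{J}_3^*$ holding at \emph{every} minimum of $\M$, and the Edge-of-Stability picture, under which a method run with fixed $\eta$ and $B$ settles at the sharpest minimum of $\M$ that is linearly stable for it. The first ingredient nests the three methods' stability regions; the second then reads the sharpness ordering off as a supremum over progressively smaller sets, where here $\mathbf{H}_k^*$ is understood as the loss Hessian $\Hbar^*$ at the minimum at which method $k$ operates at its edge of stability.

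First I would record the matrix ordering. From \eqref{eq:jssgd}, \eqref{eq:J2} and \eqref{eq:J3}, at any $\mathbf{w}^* \in \M$ we have $\mathbf{J}_2^* - \mathbf{J}_1^* = \rho(\Hbar^*)^2 + \rho\mathbf{\Sigma}_1$ and $\mathbf{J}_3^* - \mathbf{J}_2^* = \mathbf{\Omega}$, with $\mathbf{\Omega}$ as in \eqref{eq:gamom}. Because $\rho>0$, the square $(\Hbar^*)^2$ is PSD, $\mathbf{\Sigma}_1 = \mathbf{V}(\mathbf{H}_{\mathcal{S}}^*)$ is an expectation of squared symmetric matrices hence PSD, and $\mathbf{\Omega}$ is PSD for the same reason; therefore $\mathbf{J}_1^* \preceq \mathbf{J}_2^* \preceq \mathbf{J}_3^*$, and in particular $\lambda_1(\mathbf{J}_1^*) \le \lambda_1(\mathbf{J}_2^*) \le \lambda_1(\mathbf{J}_3^*)$ pointwise over $\M$. (This monotonicity is the structural fact underlying Theorem~\ref{stability-sam-like-methods}.)

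Next I would pass to stability regions and then to sharpness. Under Assumption~\ref{assumption:alignment} the scalar $\alpha = \max\{\eta/2,\eta^2\beta\}$ is common to the three methods, since $\beta$ depends only on $B$ and $n$, so stability of $\mathbf{w}^*$ for method $k$ is the single condition $\alpha\lambda_1(\mathbf{J}_k^*(\mathbf{w}^*)) \le 1$ of \eqref{eq:newstab}; combined with the eigenvalue ordering above — equivalently, with the contrapositive of Theorem~\ref{stability-sam-like-methods} — this gives the inclusion
\[
\{\mathbf{w}^* \in \M : \text{mSAM stable}\} \subseteq \{\mathbf{w}^* \in \M : \text{SAM stable}\} \subseteq \{\mathbf{w}^* \in \M : \text{SGD stable}\}.
\]
Invoking the EoS theory, the iteration of method $k$ drifts toward sharper minima (larger $\lambda_1(\Hbar^*)$) and enters its oscillatory regime at the boundary of its own stability region — for SGD this recovers the usual near-$2/\eta$ edge, cf. Lemma~\ref{eos-sgd} — so $\lambda_1(\mathbf{H}_k^*)$ equals the largest value of $\lambda_1(\Hbar^*(\mathbf{w}^*))$ over minima stable for method $k$. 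A supremum over a smaller set cannot be larger, hence $\lambda_1(\mathbf{H}_1^*) \ge \lambda_1(\mathbf{H}_2^*) \ge \lambda_1(\mathbf{H}_3^*)$.

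I expect the delicate point to be the last step rather than any calculation. A direct algebraic comparison at the three edge-of-stability minima — using $\lambda_1(\mathbf{J}_k^*) = 1/\alpha$ there together with $\lambda_1(\mathbf{J}_k^*) \ge \lambda_1(\Hbar^* + \rho(\Hbar^*)^2) = \lambda_1(\Hbar^*) + \rho\lambda_1(\Hbar^*)^2$ for $k=2,3$ and monotonicity of $t\mapsto t+\rho t^2$ on $[0,\infty)$ — only yields $\lambda_1(\mathbf{H}_1^*) \ge \lambda_1(\mathbf{H}_2^*)$ and $\lambda_1(\mathbf{H}_1^*) \ge \lambda_1(\mathbf{H}_3^*)$, since it produces upper bounds on $\lambda_1(\mathbf{H}_2^*)$ and $\lambda_1(\mathbf{H}_3^*)$ but nothing relating the two; the SAM-versus-mSAM inequality genuinely requires the ``sharpest stable minimum'' reading of EoS, which is an empirical input imported from \cite{cokl21, jasf20} rather than something proved here. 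A secondary caveat worth stating is that $\beta$, and hence $\alpha$, must be taken common across the three methods, consistent with \citet{jasf20}; were one to allow method-dependent $\beta_k$, the region nesting would additionally need $\beta_1 \le \beta_2 \le \beta_3$.
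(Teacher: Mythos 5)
Your handling of the outer inequalities is sound and essentially matches the paper: at the SGD edge $\alpha \lambda_1(\mathbf{H}_1^*) = 1$, while for $k=2,3$ the additional PSD terms in $\mathbf{J}_k^*$ force $\alpha\lambda_1(\mathbf{H}_k^*) \le \alpha\lambda_1(\mathbf{J}_k^*) = 1$, hence $\lambda_1(\mathbf{H}_1^*) \ge \lambda_1(\mathbf{H}_k^*)$. The gap is exactly where you suspected it: the middle inequality $\lambda_1(\mathbf{H}_2^*) \ge \lambda_1(\mathbf{H}_3^*)$. Your route to it --- each method terminates at the \emph{sharpest} minimum inside its own stability region, and the regions are nested --- does not follow from the edge-of-stability condition as the paper formalizes it in (\ref{eq:newedge}). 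That condition only asserts $\alpha\lambda_1(\mathbf{J}_k^*) = 1$ at the terminal point; it says nothing about that point maximizing $\lambda_1(\Hbar^*)$ over the stable subset of $\M$. Since $\mathbf{J}_k^*$ differs from $\Hbar^*$ by terms ($\rho(\Hbar^*)^2$, $\rho\mathbf{\Sigma}_1$, $\mathbf{\Omega}$) that vary from minimum to minimum, sitting on the boundary in $\mathbf{J}$-sharpness is not the same as being extremal in $\Hbar$-sharpness over the stable set. You are therefore importing a global ``supremum over the stable region'' premise that is strictly stronger than anything the paper assumes, and the middle inequality is left resting on it.

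The paper closes this gap algebraically, and the ingredient you are missing is a fuller use of Assumption~\ref{assumption:alignment}. At $\mathbf{w}_2^*$ the alignment gives $\mathbf{\Sigma}_1 = \mathbf{H}_2^*/\beta$, so $\alpha\mathbf{J}_2^* = f(\mathbf{H}_2^*)$ \emph{exactly}, where $f(x) = \alpha(1+\tfrac{\rho}{\beta})x + \alpha\rho x^2$ is a quadratic with nonnegative coefficients (hence monotone on $\mathbb{R}_+$, and $\lambda_1(f(\mathbf{A})) = f(\lambda_1(\mathbf{A}))$ for symmetric PSD $\mathbf{A}$); likewise $\alpha\mathbf{J}_3^* = f(\mathbf{H}_3^*) + \alpha\mathbf{\Omega}$. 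Equating the two edge conditions $\alpha\lambda_1(\mathbf{J}_2^*) = \alpha\lambda_1(\mathbf{J}_3^*) = 1$ and applying Weyl's inequality with $\mathbf{\Omega} \succeq 0$ yields $f(\lambda_1(\mathbf{H}_2^*)) = \lambda_1(f(\mathbf{H}_3^*) + \alpha\mathbf{\Omega}) \ge f(\lambda_1(\mathbf{H}_3^*))$, and monotonicity of $f$ gives $\lambda_1(\mathbf{H}_2^*) \ge \lambda_1(\mathbf{H}_3^*)$. So your closing remark --- that a direct algebraic comparison ``only yields'' the two bounds against $\mathbf{H}_1^*$ --- is where the proposal goes wrong: by converting $\rho\mathbf{\Sigma}_1$ into $(\rho/\beta)\mathbf{H}^*$ rather than discarding it as a PSD remainder, one gets the exact identity $f(\lambda_1(\mathbf{H}_2^*)) = 1$ at the SAM edge to play against the one-sided bound $f(\lambda_1(\mathbf{H}_3^*)) \le 1$ at the mSAM edge, which is precisely the two-sided leverage needed. (Your caveat that $\beta$, and hence $\alpha$, must be common across the three methods is correct and is indeed what the paper assumes, citing that $\beta$ depends only on $B$ and $n$.)
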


Appendix~\ref{sec:proofs3} gives a proof of this result. Let us give a rough explanation of this important result. Comparing the expressions for $\mathbf{J}_1^*$, $\mathbf{J}_2^*$ and $\mathbf{J}_3^*$ in (\ref{eq:jssgd}) (\ref{eq:J2}) and (\ref{eq:J3}), we can see from the additional terms that come as we go from $\mathbf{J}_1^*$ to $\mathbf{J}_2^*$ to $\mathbf{J}_3^*$ that, for meeting the edge of stability condition $\lambda_1(\mathbf{J}_i^*)=1 \; \forall i$, $\lambda_1(\mathbf{H}_3^*)$ cannot be larger than $\lambda_1(\mathbf{H}_2^*)$, which in turn cannot be larger than $\lambda_1(\mathbf{H}_1^*)$. Thus, mSAM has better flatness than SAM, and SAM has better flatness than SGD.

We support Theorem \ref{theorem:lambdaHess} with an empirical investigation on three datasets in Section~\ref{sec:deeper}; see Table~\ref{eigen-table} there and the discussion below it.

\section{Numerical Experiments}
\label{sec:expts}

This section compares mSAM to SAM and vanilla optimization methods (i.e. without sharpness-aware modification) on various model architectures and datasets. We report the average and standard deviation of accuracy on the test data over five independent runs. We also note that we use the same values of hyper-parameters for all algorithms, where $\rho$ is chosen based on the best validation performance for SAM, and other hyper-parameters are chosen based on the best validation error for vanilla methods. Moreover, although using different values of $\rho$ for each micro-batch in mSAM is possible, doing so requires tuning numerous hyper-parameters, which is computationally infeasible. Therefore, we use the exact value of $\rho$ for all micro-batches.

\subsection{Image Classification}\label{imagesec}
In our first set of experiments on image classification datasets,
we compare the performance of mSAM, SAM and vanilla methods with multiple CNN architectures such as ResNets~\citep{resnets} and WideResNet~\citep{wideresnet}. We use CIFAR10/100~\citep{cifar} and ImageNet~\citep{imagenet} datasets as our test bed. We use different seeds for all five runs of each experiment.

The average accuracies corresponding to the experiments with the three datasets and the architectures of ResNet and WideResNet are reported in Table~\ref{cifar-table}. For the CIFAR datasets, we use an effective batch size of $512$ across four NVIDIA V100 GPUs. For the ImageNet dataset, we use an effective batch size of $2048$ across eight GPUs. We use $m=32$ for mSAM for CIFAR and $4$/$8$ for ResNet50 and WideResNet-50-2, respectively, for ImageNet. Details about hyper-parameters used to produce these results can be found in Appendix~\ref{imageparam}. Overall, mSAM consistently leads to better accuracy than SAM and vanilla methods in all CNN-related experimental results reported in Table~\ref{cifar-table}. 

Following recent results that suggest that sharpness-aware optimization can substantially improve the generalization quality of Vision Transformers (ViTs) ~\citep{samvit}, we conduct additional experiments on ViT architectures. In particular, we use the pre-trained ViT-B/16 checkpoint from~\citep{vitpaper} and fine-tune the model on CIFAR10/100 data independently. For ImageNet, we train a smaller version of the ViT model (ViT-S/32) from scratch.  We choose $512$ as the batch size for the CIFAR fine-tuning tasks and use $4096$ as the batch size for training from scratch on the ImageNet dataset. The average accuracy results for ViT are reported in Table~\ref{cifar-table}. Similar to results on CNNs, mSAM outperforms both SAM and vanilla training across all sets of ViT-related tasks. Note that we do not leverage any advanced data augmentation techniques, and only use inception-style image pre-processing.  Other hyper-parameters to produce these results are listed in Appendix~\ref{imageparam}. 

\subsection{NLP Fine-tuning}
Our next set of experiments is based on four tasks from the GLUE benchmark~\citep{glue}. In particular, we choose COLA and MRPC as two small datasets and SST-2 and QQP as two larger datasets for empirical evaluation. Fine-tuning experiments are performed with the RoBERTa-base model~\citep{liu2019roberta} on four NVIDIA V100 GPUs with an effective batch size of $32$. For the ease of reproduction of the results, we tabulate all the hyper-parameters used in Appendix~\ref{glueparam}. For the fine-tuning experiments, we report the average value of Matthews Correlation Coefficient for COLA, and average accuracy for other datasets in Table~\ref{nlp-table}. Overall, mSAM performs better than the baseline methods on these datasets. However, the variance among different runs is comparably high for smaller datasets such as COLA and MRPC. On the other hand, the results on larger data such as SST-2 and QQP are expectedly more robust across different runs.

\begin{table*}[htbp]
    \begin{minipage}{.49\textwidth}
      \centering
        \begin{tabular}{cccc}
        \toprule
        Task &Vanilla & SAM & mSAM ($m=8$)
        \\ \midrule
        COLA        & $63.66\pm2.46$ & $64.30\pm0.49$ & $64.57\pm 0.66$\\
        MRPC        & $89.79\pm0.05$ & $90.37\pm0.13$ & $90.92\pm0.16$ \\
        SST-2        & $94.27\pm0.18$ & $95.21\pm0.12$  & $95.38\pm0.10$  \\
        QQP        & $91.70\pm0.11$ & $92.13\pm0.02$ &  $92.18\pm0.03$ \\
        \bottomrule
        \end{tabular}
      \caption{Accuracy Results for GLUE Tasks}
      \label{nlp-table}
    \end{minipage}%
    \begin{minipage}{.49\textwidth}
      \centering
        \begin{tabular}{cccc}
        \toprule
         Model & Vanilla & SAM &  mSAM
        \\  
        \midrule
        ResNet50     &  $26\pm2$  & $21\pm3$  & $18\pm1$ \\
        WRN-28-10     &   $92\pm4$ & $30\pm2$  & $17\pm1$ \\
        \bottomrule
        \end{tabular}
      \caption{$\lambda_{\max}$ (sharpness) for CNNs}
      \label{eigen-table}
    \end{minipage}
\end{table*}

\section{A Deeper Investigation of mSAM}\label{sec:deeper}
To further understand the mSAM algorithm, we design and report some experiments in this section. Additional experimental results are moved to Appendix~\ref{app:switch}.

\begin{figure}[htbp]
     \centering
\begin{tabular}{cc}
 ResNet50 & WRN-28-10 \\
     \includegraphics[trim={0.93cm 0.5cm 0.93cm 0.5cm},clip,width=0.47\linewidth]{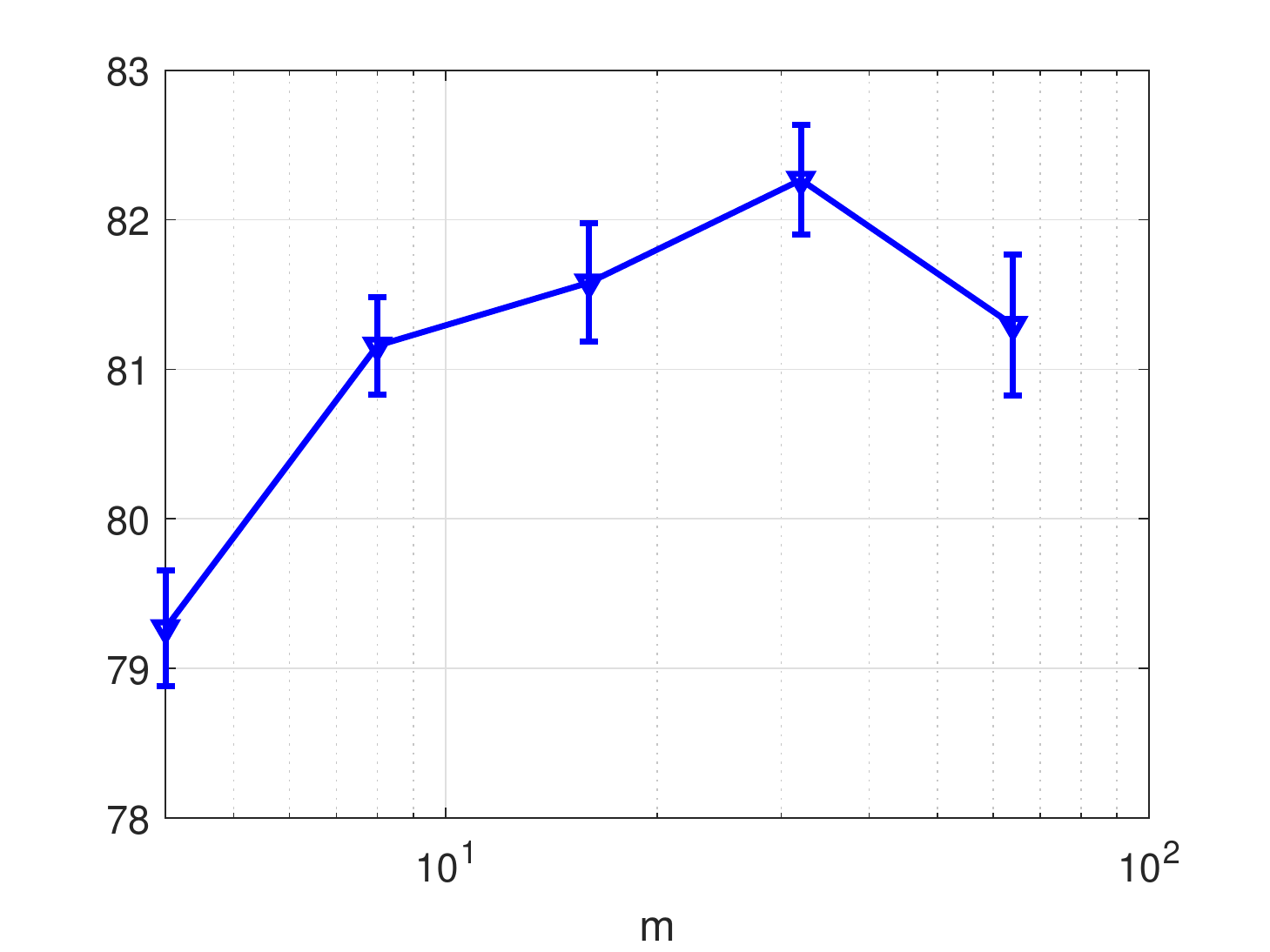}&   
     \includegraphics[trim={0.93cm 0.5cm 0.93cm .5cm},clip,width=0.47\linewidth]{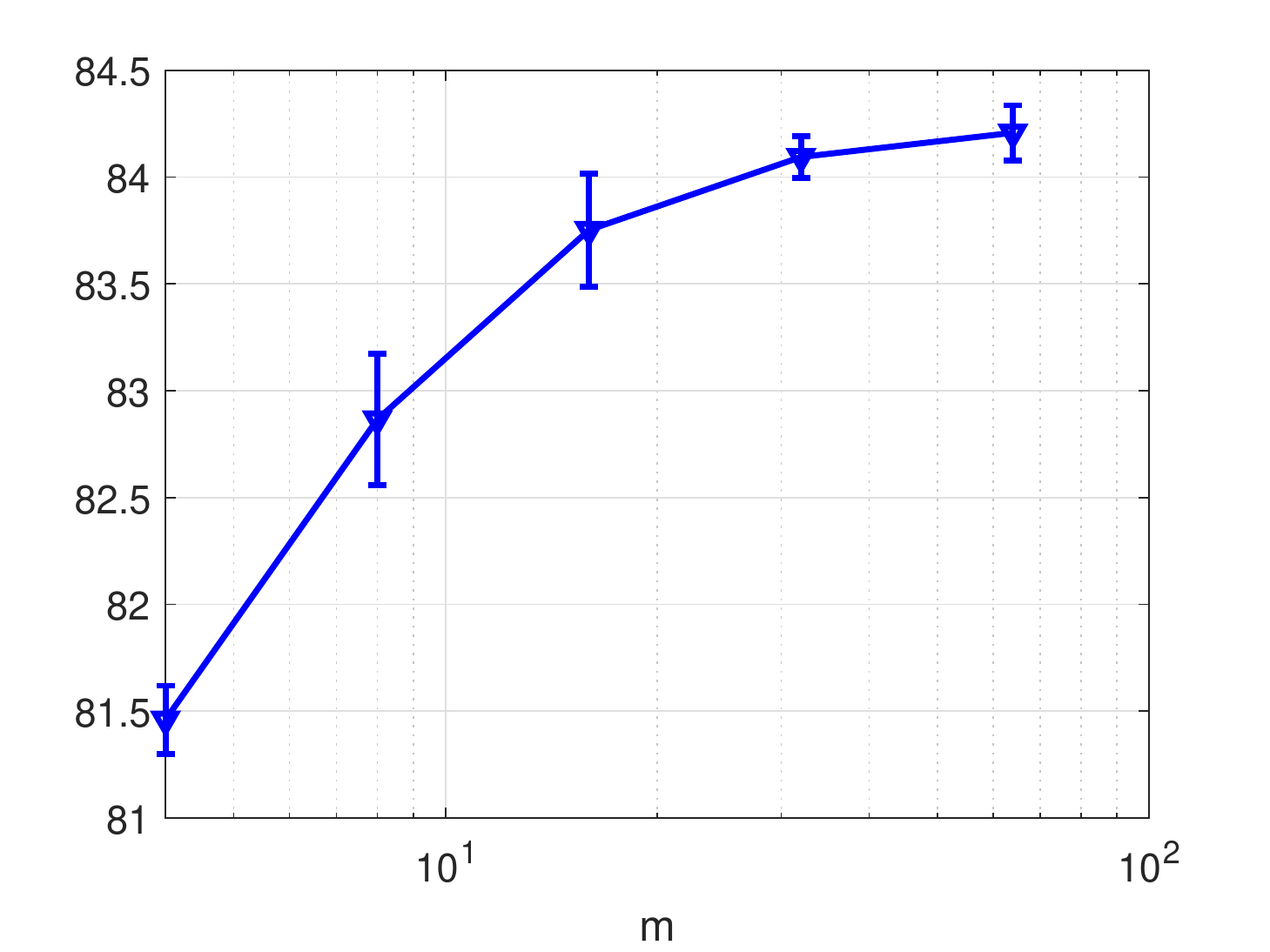}  \\
     $m$ & $m$
\end{tabular}
        \caption{The effect of varying $m$ on accuracy. We see that increasing $m$ up to 32 results in better accuracy. However, increasing $m$ further leads to worse results/marginal improvements.}
        \label{fig:mb}
\end{figure}

\paragraph{Effect of varying $m$:}
In our experiments, we have observed that a larger value of $m$ often leads to better test accuracy. We recover SAM by setting $m=1$, which produces inferior results. To test this hypothesis, we set up the experiments with the CIFAR100 dataset on two CNNs, ResNet50 and WRN-28-10, in the same setup as in Section~\ref{imagesec}. We run mSAM for different values of $m\in\{4,8,16,32,64\}$. The accuracy results for these experiments are shown in Figure~\ref{fig:mb}.

Increasing $m$ improves the performance up to $m\approx 32$. However, a value of $m$ larger than this threshold either leads to worse performance or marginal improvements, so increasing $m$ does not necessarily result in better generalization. Intuitively, when the micro-batch is too small, the perturbation derived according to the micro-batch might not be a good estimate of the actual SAM perturbation, leading to worse performance. We leave the theoretical analysis of such a phenomenon an interesting direction for future research. We also note that understanding how the optimal value of $m$ and batch size interact is an open question for future work.

\paragraph{Are mSAM solutions flat?}
The SAM algorithm hypothesizes that flat solutions generalize better. Since mSAM consistently outperforms SAM, it is worth investigating if mSAM settles for even flatter solutions than SAM, as predicted by our theory in section~\ref{sec:theory}. To that end and to quantify sharpness, we calculate the largest eigenvalue of the Hessian of the loss function $\mathcal{L}_{\mathcal{S}}(\Vec{w})$ at the final solution, denoted as $\lambda_{\max}$. We calculate $\lambda_{\max}$ over the full train data using power iteration, as implemented by~\cite{hessian-eigenthings}. We use the ResNet50 and WRN-28-10 models trained on CIFAR100 (see Section~\ref{imagesec}) to calculate $\lambda_{\max}$. The average results for these experiments are reported in Table~\ref{eigen-table}. We see that mSAM leads to solutions with smaller $\lambda_{\max}$ than SAM and vanilla SGD, and this finding agrees with Theorem~\ref{theorem:lambdaHess}.

\begin{table}[ht!]
\small
\begin{sc}
\begin{center}
\begin{tabular}{ccccc}
\toprule
 Model & Vanilla & SAM &  mSAM 
\\  
\midrule
ResNet50      & $14.74$ & $25.54$  & $25.87$ \\
 WRN-50-2-bottleneck     & $27.68$ & $52.02$  & $58.93$ \\
 ViT-S/32       & $19.91$ & $24.50$  & $30.66$ \\
 \bottomrule
\end{tabular}
\end{center}
\end{sc}
\caption{Runtime of different methods and architectures on ImageNet data (in hours)}
\label{runtime-table-imagenet-main}
\end{table}

\paragraph{mSAM runtime:}\label{app:runtime}
A general misconception about mSAM is that it is computationally inefficient, as the total number of forward-backwards passes in the network is multiplied by $m$~\cite{nonsensemsam}. However, note that these passes are performed on micro-batches, which are $m$ times smaller than the actual minibatch. Hence, the overall computational cost gets amortized and is never as high as $m$ times the cost of SAM. In practice, on large networks, the runtime of mSAM is \emph{only} $1.1$-$1.2$ times more compared to SAM. Particularly, in Table~\ref{runtime-table-imagenet-main} we report the runtime of SGD, SAM and mSAM for the ImageNet data. The overhead of mSAM over SAM for ViT-S/32 is about $20\%$, while for ResNet50 the overhead is negligible. We discuss the computational efficiency of mSAM in more details in Appendix~\ref{app:switch}, as well as discussing a few hybrid algorithms to reduce the computational cost of mSAM even further.

\section{Discussion}
Within the confines of this study, we proffer a theoretical rationale that explains how mSAM leads to flatter solutions when compared with SAM. This perspective extends a contemporary framework of stability dynamics. It is discernible from intuitive observations that minima characterized by heightened flatness frequently correlate with enhanced generalization capabilities. Our comprehensive empirical inquiry reinforces such theoretical assertion by demonstrating the superiority of mSAM over SAM across diverse datasets with image classification and NLP tasks and a spectrum of model architectures, including CNNs and Transformers. The extent of performance differentiation is notably contingent upon the specifics of the dataset and the inherent architecture. Our empirical endeavours reveal that the computational overhead associated with mSAM does not incur a significantly higher cost than SAM, thereby establishing its viability for solving large-scale problems.

Furthermore, our theoretical framework is amenable to broader generalization, specifically, the dissection of the conventional mini-batch into micro-batches. This extension becomes particularly pertinent when considering gradient computation of a compositional nature, an attribute often observed in other iterations of sharpness-aware minimization algorithm. An avenue ripe for future exploration entails the augmentation and extension of the flatness theory as applied to the segmentation of mini-batches into micro-batches, particularly in the context of gradient updates characterized by non-linear aggregation. Such a development presents an intriguing prospect wherein this technique could be pragmatically harnessed to enhance the generalization efficacy of alternative methodologies.

\section*{Acknowledgements}
Kayhan Behdin contributed to this work while he was an intern at LinkedIn during summer 2022 and 2023. This work is not a part of his MIT research. Rahul Mazumder contributed to this work while he was a consultant for LinkedIn (in compliance with MIT’s outside professional activities policies). This work is not a part of his MIT research.

\bibliographystyle{unsrtnat}
\bibliography{ref}

\appendix
\clearpage
\newpage
\appendix
\numberwithin{table}{section}
\numberwithin{figure}{section}

\section{Proofs of the results in section~\ref{sec:theory}}
\label{sec:proofs3}

\begin{lemma}
For a random symmetric PSD matrix $\mathbf{X}$:
 \begin{equation}
     \E[\mathbf{X}^2] = \E[\mathbf{X}]^2 + \E[(\mathbf{X} - \E[\mathbf{X}])^2].
 \end{equation}
\end{lemma}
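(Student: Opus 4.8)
The plan is to reproduce the familiar scalar variance decomposition $\E[X^2] = (\E X)^2 + \E[(X-\E X)^2]$ at the level of matrices, the only new subtlety being care with non-commutativity. Write $\mathbf{M} \defeq \E[\mathbf{X}]$, which is a deterministic matrix. First I would expand the quadratic, keeping the two cross terms separate:
\[
(\mathbf{X} - \mathbf{M})^2 = \mathbf{X}^2 - \mathbf{X}\mathbf{M} - \mathbf{M}\mathbf{X} + \mathbf{M}^2.
\]
Next I would apply $\E[\cdot]$ to both sides and use its linearity, together with the fact that a deterministic matrix can be pulled out of an expectation on either side: $\E[\mathbf{X}\mathbf{M}] = \E[\mathbf{X}]\,\mathbf{M} = \mathbf{M}^2$ and, likewise, $\E[\mathbf{M}\mathbf{X}] = \mathbf{M}\,\E[\mathbf{X}] = \mathbf{M}^2$. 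Substituting gives $\E[(\mathbf{X}-\mathbf{M})^2] = \E[\mathbf{X}^2] - \mathbf{M}^2 - \mathbf{M}^2 + \mathbf{M}^2 = \E[\mathbf{X}^2] - \mathbf{M}^2$, and rearranging this is exactly the claimed identity.

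The single point that needs attention is that matrix products do not commute, so one must not prematurely combine $-\mathbf{X}\mathbf{M} - \mathbf{M}\mathbf{X}$ into $-2\mathbf{X}\mathbf{M}$; nevertheless, after taking expectations each cross term separately equals $\mathbf{M}^2$, so the cancellation proceeds exactly as in the scalar case. Note that symmetry and positive semidefiniteness of $\mathbf{X}$ play no role in the identity itself (they matter only for the downstream uses, e.g.\ to ensure $\mathbf{V}(\mathbf{X}) = \E[\mathbf{X}^2] - (\E\mathbf{X})^2$ is PSD); all that is implicitly required is that the relevant entrywise second moments are finite so that every expectation above is well defined. There is no real obstacle here — it is a two-line computation once the expansion is written out.
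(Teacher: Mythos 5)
Your proof is correct and follows essentially the same route as the paper's: expand $(\mathbf{X}-\E[\mathbf{X}])^2$, take expectations termwise, and cancel the cross terms. In fact your version is written more carefully --- the paper's own expansion contains a sign slip (writing $-\E[\mathbf{X}]^2$ for the final term of the square instead of $+\E[\mathbf{X}]^2$) that your explicit bookkeeping of the two non-commuting cross terms avoids, and your remark that symmetry and positive semidefiniteness are not needed for the identity itself is accurate.
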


\begin{proof}
Note that
\begin{align*}
    \E[(\mathbf{X} - \E[\mathbf{X}])^2] &= \E[\mathbf{X}^2 - \mathbf{X}\E[\mathbf{X}] - \E[\mathbf{X}]\mathbf{X} - \E(\mathbf{X})^2] \\ \nonumber
    &= \E [\mathbf{X}^2] - \E[\mathbf{X}]\E[\mathbf{X}] - \E[\mathbf{X}]\E[\mathbf{X}] - \E[\mathbf{X}]^2 = \E[\mathbf{X}^2] - \E[\mathbf{X}]^2 
\end{align*}
which proves the above lemma.
\end{proof}

\noindent\textbf{Proof of Lemma 3.1:} 
\begin{proof}
We have
\begin{equation}
    \E [(\mathbf{I} - \eta \mathbf{J}_{\mathcal{S}}(\mathbf{w}^*) )^2] = \E [ (\mathbf{I} - 2\eta \mathbf{J}_{\mathcal{S}}(\mathbf{w}^*) + \eta^2 (\mathbf{J}_{\mathcal{S}}(\mathbf{w}^*))^2 )] = (\mathbf{I} - 2\eta \mathbf{J}^* + \eta^2 (\mathbf{J}^*)^2) + \E [ (\mathbf{J}_{\mathcal{S}}(\mathbf{w}^*) - \mathbf{J}^*)^2 ]
\end{equation}
which completes the proof.    
\end{proof}

\noindent\textbf{Proof of Lemma 3.2:}
\begin{proof}
$S_1$ translates to the eigenvalues of $(\mathbf{I}-\mathbf{J}^*)$ to be between $-1$ and $1$. The upper bound holds automatically since $\mathbf{J}^*$ is PSD. The lower bound completes the proof.    
\end{proof}

\noindent{\bf Proof of Lemma 3.3:} This is just a restatement of Lemma 3.2.

\noindent{\bf Proof of Lemma 3.4:}
\begin{proof}
For SGD, SAM and mSAM, let us recall, from subsection~\ref{subsec:stmethods}, their expressions for $d$ given by
\begin{equation}\label{dsgd}
\mathbf{d}_{1,\mathcal{S}}(\mathbf{w}) = \frac{1}{|\mathcal{S}|} \sum_{e\in \mathcal{S}} \nabla\mathcal{L}(\mathbf{w}; e),
\end{equation}
\begin{equation}\label{dsam}
\mathbf{d}_{2,\mathcal{S}}(\mathbf{w}) = \mathbf{d}_{1,\mathcal{S}}(\mathbf{w} + \rho \mathbf{d}_{1,\mathcal{S}}(\mathbf{w})),
\end{equation}
\begin{equation}\label{dmSAM}
\mathbf{d}_{3,\mathcal{S}}(\mathbf{w}) = \frac{1}{m} \sum_{j=1}^m \mathbf{d}_{2,\mathcal{S}_j}(\mathbf{w}).
\end{equation}
Consider the linear approximations of the three methods to obtain their $\mathbf{J}_{\mathcal{S}}(\mathbf{w}^*)$. For SGD, we have:
\begin{equation}
    \nabla \mathbf{d}_{1,{\mathcal{S}}} = \mathbf{H}_{\mathcal{S}}^* \mathbf{w} \Rightarrow \mathbf{J}_{1,\mathcal{S}}(\mathbf{w}^*) = \mathbf{H}_{\mathcal{S}}^* \defeq \frac{1}{|\mathcal{S}|} \sum_{i\in \mathcal{S}} \mathbf{H}_i(\mathbf{w}^*)
\end{equation}
For SAM, we have:
\begin{equation}
 \nabla \mathbf{d}_{2,{\mathcal{S}}} = \mathbf{H}_{\mathcal{S}}^* (\mathbf{w} + \rho \mathbf{H}_{\mathcal{S}}^* \mathbf{w}) = [\mathbf{H}_{\mathcal{S}}^* + \rho (\mathbf{H}_{\mathcal{S}}^*)^2] \mathbf{w} \Rightarrow \mathbf{J}_{2,\mathcal{S}}(\mathbf{w}^*) = \mathbf{H}_{\mathcal{S}}^* + \rho (\mathbf{H}_{\mathcal{S}}^*)^2
\end{equation}
For mSAM, we have:
\begin{equation}
 \nabla \mathbf{d}_{3,{\mathcal{S}}} = [\frac{1}{m} \sum_{j=1}^m (\mathbf{H}_{\mathcal{S}_j}^* + \rho (\mathbf{H}_{\mathcal{S}_j}^*)^2)] \mathbf{w} = [\mathbf{H}_{\mathcal{S}}^* + \frac{\rho}{m} \sum_{j=1}^m (\mathbf{H}_{\mathcal{S}_j}^*)^2)] \mathbf{w} \Rightarrow \mathbf{J}_{3,\mathcal{S}}(\mathbf{w}^*) = \mathbf{H}_{\mathcal{S}}^* + \frac{\rho}{m} \sum_{j=1}^m (\mathbf{H}_{\mathcal{S}_j}^*)^2 \label{eq:J3S}
 \end{equation}
Applying Lemma A.1 to the second term of $\mathbf{J}_{3,\mathcal{S}}(\mathbf{w}^*)$ in Eq. \ref{eq:J3S} gives:
\begin{equation}\label{Jmsam}
\mathbf{J}_{3,\mathcal{S}}(\mathbf{w}^*) = \mathbf{H}_{\mathcal{S}}^* + \frac{\rho}{m} \sum_{j=1}^m (\mathbf{H}_{\mathcal{S}_j}^*)^2 = \mathbf{H}_{\mathcal{S}}^* + \rho (\mathbf{H}_{\mathcal{S}}^*)^2 + \frac{\rho}{m} \sum_{j=1}^m (\mathbf{H}_{\mathcal{S}_j}^* - \mathbf{H}_{\mathcal{S}}^*)^2
\end{equation}
Let us now derive $\mathbf{J}^*$ and $\mathbf{\Sigma}$ for the three methods. For SGD:
\begin{equation}
\mathbf{J}_1^* = \Hbar^*,  \; \mathbf{\Sigma}_1 = \E[(\mathbf{H}_{\mathcal{S}}^* - \Hbar^*)^2]
\end{equation}
SAM: Applying Lemma A.1 again gives
\begin{equation}
\mathbf{J}_2^* = \Hbar^* + \rho\E[(\mathbf{H}_{\mathcal{S}}^*)^2] = \Hbar^* + \rho(\Hbar^*)^2 + \rho\E[(\mathbf{H}_{\mathcal{S}}^* - \Hbar^*)^2] = \Hbar^* + \rho(\Hbar^*)^2 + \rho\mathbf{\Sigma}_1,  
\end{equation}
\begin{equation}
\mathbf{\Sigma}_2 = \E[(\mathbf{J}_{2,\mathcal{S}}(\mathbf{w}^*)-\mathbf{J}_2^*)^2] = \E[(\mathbf{H}_{\mathcal{S}}^* + \rho (\mathbf{H}_{\mathcal{S}}^*)^2 - \mathbf{J}_2^*)^2]
\end{equation}
mSAM:
\begin{equation}
    \mathbf{J}_3^* = \E[\mathbf{J}_{3,\mathcal{S}}(\mathbf{w}^*)] = \E[\mathbf{H}_{\mathcal{S}}^* + \rho (\mathbf{H}_{\mathcal{S}}^*)^2 + \frac{\rho}{m} \sum_{j=1}^m (\mathbf{H}_{\mathcal{S}_j}^* - \mathbf{H}_{\mathcal{S}}^*)^2]
\end{equation}
The expectations of the first two terms are exactly as in SAM. Thus,
\begin{equation}
    \mathbf{J}_3^* = \mathbf{J}_2^* + \mathbf{\Omega} = \Hbar^* + \rho(\Hbar^*)^2 + \rho\mathbf{\Sigma}_1 + \mathbf{\Omega}
\end{equation}
where $\mathbf{\Omega}$ is as in (\ref{eq:gamom}). Next,
\begin{equation}
    \mathbf{\Sigma}_3 = \E[(\mathbf{J}_{3,\mathcal{S}}(\mathbf{w}^*)-\mathbf{J}_3^*)^2]
\end{equation}
which is (\ref{eq:sig3}).
\end{proof}

\noindent{\bf Proof of Theorem 3.1} 
\begin{proof}
It follows in a straightforward way from (\ref{eq:newinstab}), (\ref{eq:jssgd}), (\ref{eq:J2}), and (\ref{eq:J3}), $\mathbf{J}_3^* \succeq \mathbf{J}_2^* \succeq \mathbf{J}_1^*$.
\end{proof}

\noindent{\bf Proof of Theorem 3.2}
\begin{proof}
Let $(\mathbf{J}_1^*, \mathbf{H}_1^*)$  $(\mathbf{J}_2^*, \mathbf{H}_2^*)$ and $(\mathbf{J}_3^*, \mathbf{H}_3^*)$ denote the linearized dynamics and Hessians of SGD, SAM and mSAM at their edges of stability, at points $\mathbf{w}_1^*$, $\mathbf{w}_2^*$, and $\mathbf{w}_3^*$ respectively.

Let us first compare SAM and mSAM. Let $\mathbb{R}_+$ denote the non-negative reals. Define the quadratic function with non-negative coefficients (and hence monotone), $f:\mathbb{R}_+\rightarrow \mathbb{R}_+$ as $f(x) = \alpha(1+\frac{\rho}{\beta}) x + \alpha\rho x^2$. 
For any symmetric PSD matrix, $\mathbf{A}$, $\lambda_1(f(\mathbf{A})) = f(\lambda_1(\mathbf{A}))$. This follows from the fact that, if $\mathbf{A}$ is symmetric PSD and has $\{\mbox{(eigenvalue, eigenvector)}\}$ pairs $\{(\mu_i, \mathbf{v}_i)\}$, then for any positive integer $p$, $\mathbf{A}^p$ has $ \{ (\mu_i^p, \mathbf{v}_i) \}$ as its set of (eigenvalue, eigenvector) pairs. Starting from the edge of stability condition for SAM (\ref{eq:newedge}), 
\begin{align*}
    \alpha \lambda_1(\mathbf{J}_2^*)
    &= \alpha \lambda_1(\mathbf{H}_2^* + \rho(\mathbf{H}_2^*)^2 + \rho \mathbf{\Sigma}_1) \\
    &= \lambda_1(\alpha\mathbf{H}_2^* + \alpha\rho(\mathbf{H}_2^*)^2 + \alpha\rho \mathbf{\Sigma}_1) \\
    &= \lambda_1(\alpha\mathbf{H}_2^* + \alpha\rho(\mathbf{H}_2^*)^2 + \frac{\alpha\rho}{\beta} \mathbf{H}_2) \\
    &= \lambda_1(f(\mathbf{H}_2^*)),
\end{align*}
where the third line follows from $\mathbf{H}_2^* = \EE[\mathbf{H}_{2,\mathcal{S}}^*] = \beta\mathbf{\Sigma}_1$, since the SGD dynamics around $\mathbf{w}_2^*$ satisfy assumption \ref{assumption:alignment}. Meanwhile, by the same steps we applied to the SAM dynamics while accounting for the extra term $\mathbf{\Omega}$, we find that:
\begin{gather*}
    \alpha \lambda_1(\mathbf{J}_3^*) = \lambda_1(f(\mathbf{H}_3^*) + \alpha \mathbf{\Omega}).
\end{gather*}
Since both the SAM and mSAM dynamics are assumed to satisfy the edge-of-stabililty condition, we have that $\lambda_1(\mathbf{J}_3^*) =  \lambda_1(\mathbf{J}_2^*)$. By Weyl's inequality,
\begin{gather}
    \lambda_1(f(\mathbf{H}_2^*))
    = \lambda_1(f(\mathbf{H}_3^*) + \alpha \mathbf{\Omega})
    \geq \lambda_1(f(\mathbf{H}_3^*)) + \alpha\lambda_{\min}(\mathbf{\Omega}).
\end{gather}
We know that $\mathbf{\Omega}$ is PSD, so we conclude that $\lambda_1(f(\mathbf{H}_2^*)) \geq \lambda_1(f(\mathbf{H}_3^*))$, hence $f(\lambda_1(\mathbf{H}_2^*)) \geq f(\lambda_1(\mathbf{H}_3^*))$. By the monotonicity of $f(\cdot)$, we conclude that $\lambda_1(\mathbf{H}_2^*) \geq \lambda_1(\mathbf{H}_3^*)$.

Now let us compare SGD and SAM. We have
\begin{equation}
    \lambda_1(\mathbf{H}_1^*) = 1 \mbox{ and } \lambda_1(\mathbf{H}_2^*) \le 1,
\end{equation}
where the second is an inequality because $\mathbf{J}_2^*$ in (\ref{eq:J2}) has an additional psd term, $\rho (\mathbf{H}_1^*)^2 + \rho\mathbf{\Sigma}_1$.
Now,
\begin{equation}
    \alpha \lambda_1(\mathbf{H}_2^*) \le 1 = \alpha \lambda_1(\mathbf{H}_1^*)
\end{equation}
and, since $\alpha>0$, we have $\lambda_1(\mathbf{H}_2^*) \le \lambda_1(\mathbf{H}_1^*)$, which completes the proof.
\end{proof}

\section{mSAM and Computational Efficiency}\label{app:switch}
In this section, we discuss details of mSAM implementation and review its computational efficiency. 
mSAM can be implemented either to have less memory footprint or to be faster. We choose to use the memory-efficient version, making mSAM more suitable for training larger models. Specifically, given a mini-batch of data such as $\mathcal{S}$, this mini-batch is divided into $m$ micro-batches in the system memory, and then load each micro-batch separately to the GPU memory whenever it is used. This leads to a slight runtime overhead due to the need to move the data (micro-batches) in and out of the GPU memory. However, we opted to use the memory-efficient implementation as it enables us to train models with any batch size, as long as the micro-batch size is sufficiently small. This choice of a memory-optimized implementation is due to the fact that newer DNN models tend to be larger. We reemphasize that although mSAM performs $m$-times many more forward-backward passes, each pass is done on a micro-batch that is $m$-times smaller. Therefore, in terms of forward-backward passes, SAM and mSAM are equivalent.

To be more specific, we report the runtime for SGD, SAM and mSAM in Table~\ref{runtime-table} for CIFAR100 data and in Table~\ref{runtime-table-imagenet} for ImageNet data. 

\begin{table}[ht!]
\caption{Runtime of different methods and architectures on CIFAR100 data (in seconds)}
\label{runtime-table}
\begin{footnotesize}
\begin{sc}
\begin{center}
\begin{tabular}{ccccc}
\toprule
 Model & Vanilla & SAM &  mSAM 
\\  
\midrule
ResNet50      & $4497\pm11$ & $7440\pm9$  &  $16196\pm77$ \\
 WRN     & $10675\pm18$  & $17483\pm40$  &   $22261\pm24$\\
 ViT-B/16      & $4349\pm21$ & $7007\pm43$ &   $8163\pm14$ \\
 \bottomrule
\end{tabular}
\end{center}
\end{sc}
\end{footnotesize}
\end{table}

\begin{table}[ht!]
\caption{Runtime of different methods and architectures on ImageNet data (in hours)}
\label{runtime-table-imagenet}
\begin{footnotesize}
\begin{sc}
\begin{center}
\begin{tabular}{ccccc}
\toprule
 Model & Vanilla & SAM &  mSAM 
\\  
\midrule
ResNet50      & $14.74\pm0.68$ & $25.54\pm0.25$  & $25.87\pm0.08$ \\
 WRN-50-2-bottleneck     & $27.68\pm0.18$ & $52.02\pm0.36$  & $58.93\pm0.26$ \\
 ViT-S/32       & $19.91\pm0.13$ & $24.50\pm0.26$  & $30.66\pm0.15$ \\
 \bottomrule
\end{tabular}
\end{center}
\end{sc}
\end{footnotesize}
\end{table}

Since SAM requires two forward-backwards passes for each batch of data, SAM is almost twice as slow as vanilla training. In our experiments, mSAM appears to be slower than SAM, although not $m$ times slower, as suggested by~\citet{nonsensemsam}. Expectedly, SAM is almost twice as slow as the vanilla method in most cases (despite ViT-S/32 experiment where the common data pre-processing stage requires more time for preparing the images into a sequence of 32 patches). We see that in the worst case, mSAM is \emph{only} twice as slow as SAM, and in the best case, the computational penalty is only within $10\%$ increase compared to SAM. Interestingly, for large models the runtime overhead of mSAM seems insignificant. For example in Table~\ref{runtime-table} for CIFAR100, the overhead is the smallest for ViT with 86M parameters, then WRN with 36M parameters has the best performance, and then ResNet50 with 23M parameters. This can be explained as we noted above, the mSAM overhead results from loading micro-batches to GPU. Note that this data communication overhead is constant regardless of the model size. This leads to the runtime overhead of mSAM being smaller for larger models, where more time is spent in the forward-backward pass.

\begin{figure}[htbp]
\centering

\begin{tabular}{cc}
 ResNet50 & WRN-28-10 \\
     \includegraphics[scale=0.50]{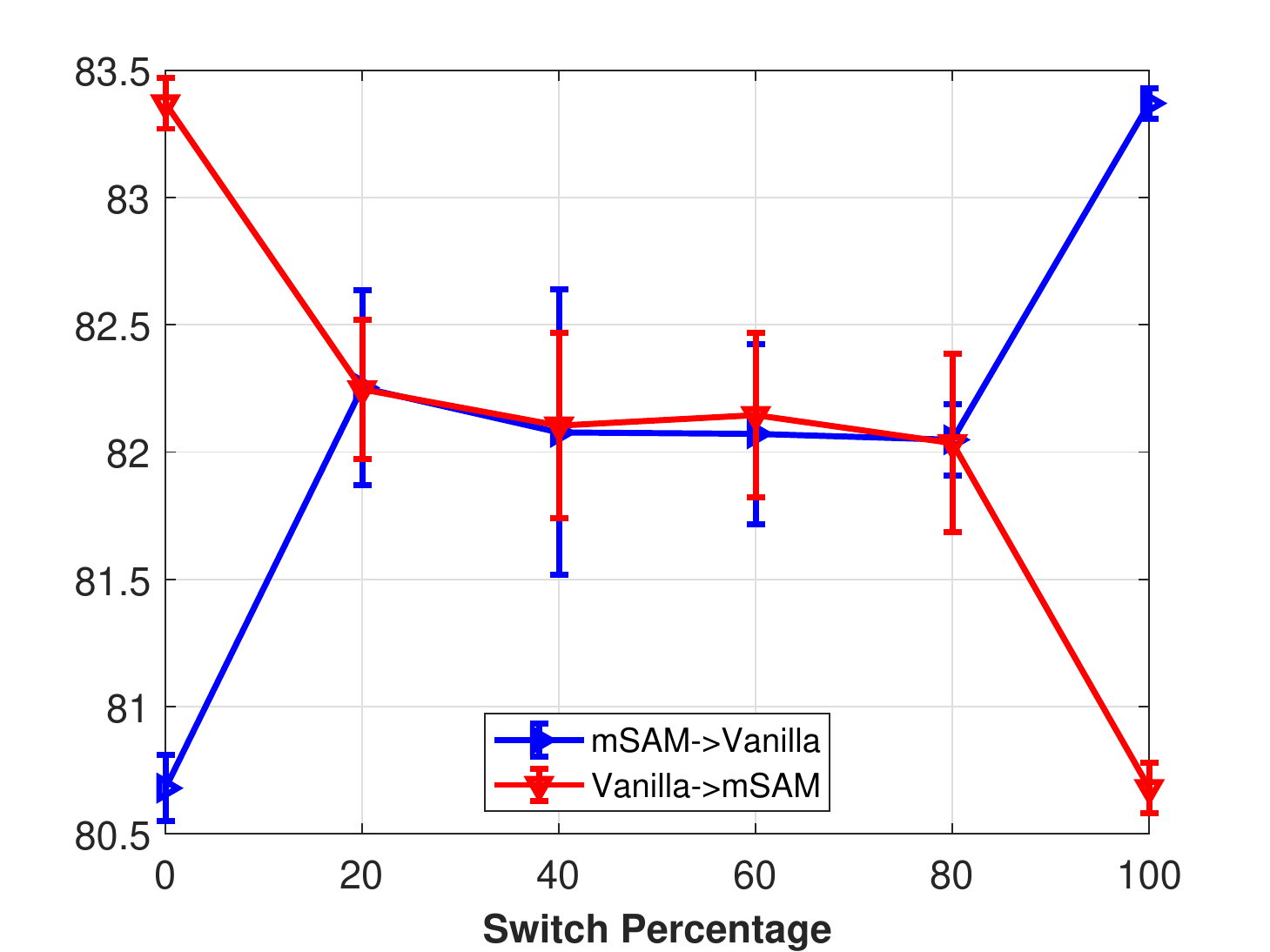}&   
     \includegraphics[scale=0.50]{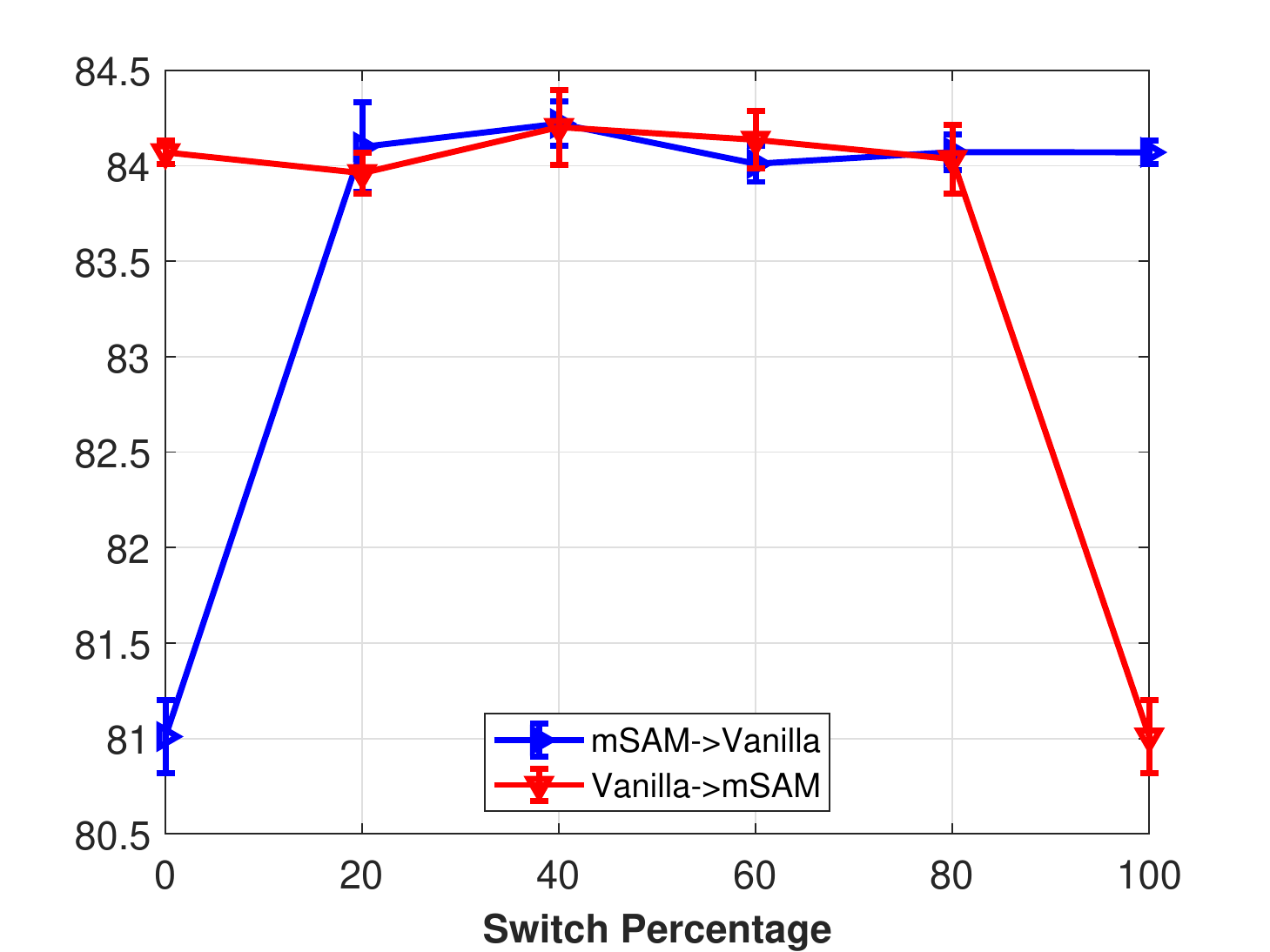}  
\end{tabular}
\caption{Effect of switching training algorithm}
\label{fig:switch}
\end{figure}

Although mSAM does not appear to be computationally prohibitive in our experiments, it is still not as efficient as vanilla training, leaving room for further improving its efficiency. To that end, we conduct the following set of experiments. Building on our CIFAR100 experiments from Section~\ref{imagesec}, we start the training either with mSAM or vanilla training and then switch to the other training algorithm at some point. We keep all other training parameters fixed. The accuracy results for this setup for ResNet50 and WRN-28-10 are reported in Figure~\ref{fig:switch}. In this figure, the switch percent is the threshold in training when we transition from one algorithm to the other. For example, for the switch percent of $20$, if we start with mSAM, we use mSAM for the first $20\%$ of epochs and vanilla updates for the rest. If mSAM is used for the initial and/or final part of training, the accuracy is always better than vanilla training. In fact, in the WRN-28-10 case, as long as we partially use mSAM, the accuracy is almost the same as training with mSAM for the entire duration. For ResNet50, not using mSAM for the whole training leads to a drop in performance; however, even in this case, the accuracy of the hybrid training is better than the SAM training. These observations suggest that it is possible to enjoy the superior performance of mSAM, at least to some degree, while not having to deal with the computational complexity of mSAM for the entire training. A better theoretical and empirical understanding of the hybrid training method can be an exciting avenue for future work.

\section{Hyper-parameters for Image Classification Experiments}\label{imageparam}
As mentioned, our experiments on CIFAR data in this section are done on 4 Nvidia V100 GPUs, with an effective batch size of 512. For mSAM, we used the micro-batch size of $16$, corresponding to $m=32$. The rest of the hyper-parameters are chosen as in Table~\ref{cifar-params} for CIFAR10/100 experiments. For ImageNet experiments, we use 8 Nvidia V100 GPUs, with an effective batch size of 2048 for ResNet50 and WideResNet-50-2-bottleneck, and 4096 for ViT-S/32.   The micro-batch size and $m$ are chosen based on a grid search within the set of $\{2, 4, 8, 16\}$. The rest of the hyper-parameters are listed in Table~\ref{imagenet-params} for ImageNet experiments. 

\begin{table}[t]
\caption{Hyper-parameters for CIFAR10/100 experiments}
\label{cifar-params}
\begin{footnotesize}
\begin{sc}
\begin{center}
\begin{tabular}{cccc}
\toprule
Model & ResNet50 & WRN-28-10 & ViT-B/16 (fine-tuning) \\
\midrule
Optimizer & SGD & SGD & AdamW\\
Peak Learning Rate & 0.5 & 0.75  &  $10^{-3}$ \\
Batch Size &512 &512 &512\\
Number of epochs &200 &200 &20\\
Momentum &0.9 & 0.9  & - \\
Weight Decay &$5\times 10^{-4}$ &$5\times 10^{-4}$ &0.3\\
Label Smoothing &0.1 &0.1 & - \\
Learning Rate Schedule &\multicolumn{3}{c}{All methods use one cycle with $5\%$ warm-up}\\
Gradient Clipping &-  &- & norm=1\\
$\rho$ (SAM/mSAM) &0.2 &0.2  &0.3\\
$m$ (mSAM) & 32 & 32 & 32 \\
\bottomrule
\end{tabular}
\end{center}
\end{sc}
\end{footnotesize}
\end{table}

\begin{table}[t]
\caption{Hyper-parameters for ImageNet experiments}
\label{imagenet-params}
\begin{footnotesize}
\begin{sc}
\begin{center}
\begin{tabular}{cccc}
\toprule
Model & ResNet50 & WRN-50-2-bottleneck & ViT-S/32  \\
\midrule
Optimizer & SGD & SGD & AdamW \\
Peak Learning Rate & 0.8 & 0.5 & 3e-3  \\
Batch Size &2048 &2048 &4096\\
Number of epochs &90 &100 &300 \\
Momentum &0.9 & 0.9  & -  \\
Weight Decay &$1\times 10^{-4}$ &$1\times 10^{-4}$ & 0.3  \\
Label Smoothing &0.1 &0.1 & -  \\
Learning Rate Schedule & 17.7\% (5K step) & 1\% & 10\% \\
Gradient Clipping &-  &- & norm=1  \\
$\rho$ (SAM/mSAM) &0.05 &0.05  &0.05\\
$m$ (mSAM) & 4 & 8 & 8\\
\bottomrule
\end{tabular}
\end{center}
\end{sc}
\end{footnotesize}
\end{table}

\section{Hyper-parameters for GLUE Experiments}\label{glueparam}

\begin{table}[htbp]
\caption{Hyper-parameters for NLP experiments}
\label{nlp-params}
\begin{footnotesize}
\begin{sc}
\begin{center}
\begin{tabular}{ccccc}
\toprule
Task & COLA & MRPC & SST-2 & QQP
\\ \midrule
Optimizer     & \multicolumn{4}{c}{AdamW}\\
Learning Rate       & $10^{-5}$ & $10^{-5}$  &$5\times 10^{-6}$ & $2\times 10^{-5}$\\
Learning Rate Schedule       & \multicolumn{4}{c}{One cycle with $6\%$ warm-up}\\
Number of Epochs        & 60 & 60  & 20 & 15\\
Weight Decay & \multicolumn{4}{c}{0.01}\\
$\rho$ (SAM/mSAM)      &0.01 & 0.01  & 0.05 & 0.05\\
\bottomrule
\end{tabular}
\end{center}
\end{sc}
\end{footnotesize}
\end{table}

Our experiments in this section are done on four Nvidia V100 GPUs, with an effective batch size of $32$. For mSAM, we have used the micro-batch size of four, which corresponds to $m=8$. The other hyper-parameters are listed in Table~\ref{nlp-params}.

\end{document}